\documentclass[11pt]{article}
\ifdefined\pdfsuppressptexinfo\pdfsuppressptexinfo=-1\fi
\newif\ifglancepublic
\glancepublictrue 
\ifglancepublic\usepackage[preprint]{acl}\else\usepackage[review]{acl}\fi
\usepackage{times}
\usepackage{latexsym}
\usepackage[T1]{fontenc}
\usepackage[utf8]{inputenc}
\usepackage{microtype}
\usepackage{inconsolata}
\usepackage{graphicx}

\usepackage{amsmath,amssymb,amsthm}
\usepackage{booktabs}
\usepackage{colortbl}
\usepackage{algorithm}
\usepackage{algpseudocode}

\theoremstyle{definition}
\newtheorem{theorem}{Theorem}
\newtheorem{proposition}{Proposition}

\newtheorem{lemma}{Lemma}
\newtheorem{assumption}{Assumption}
\newtheorem{definition}{Definition}
\newtheorem*{thmlosslessfull}{Theorem~\ref{thm:lossless} (full)}
\newtheorem*{thmtailfull}{Theorem~\ref{thm:tail} (full)}
\newtheorem*{corgroundedfull}{Corollary (Grounded is more draftable)}

\newcommand{\E}{\mathbb{E}}
\newcommand{\Prob}{\mathbb{P}}
\newcommand{\Var}{\mathrm{Var}}
\newcommand{\Hop}{\mathbb{H}}
\newcommand{\logit}{\operatorname{logit}}
\newcommand{\sigm}{\sigma}
\DeclareMathOperator*{\argmax}{arg\,max}
\newcommand{\HH}{H}          
\newcommand{\acc}{a}         
\newcommand{\pmatch}{p_{\mathrm{m}}}   
\newcommand{\pj}{p_{\mathrm{m},j}}    
\newcommand{\Eacc}{\E[\acc\mid\HH]}
\newcommand{\Lmax}{L}        
\newcommand{\Tree}{S}        
\newcommand{\budget}{N}      
\newcommand{\Acc}{A}         

\newcommand{\fitcol}[1]{\resizebox{\ifdim\width>\columnwidth\columnwidth\else\width\fi}{!}{#1}}
\newcommand{\fittext}[1]{\resizebox{\ifdim\width>\textwidth\textwidth\else\width\fi}{!}{#1}}

\definecolor{band}{gray}{0.915}
\definecolor{oursrow}{RGB}{226,237,250}
\newcommand{\grp}[2]{\rowcolor{band}\multicolumn{#1}{@{}c@{}}{\textit{#2}\strut}\\}
\newcommand{\g}[1]{\cellcolor{oursrow}#1}

\title{Vision Is Not Overhead: One-Pass Block Drafting for Lossless Speculative Decoding in Vision-Language Models}
\ifglancepublic
\author{Jungseob Lee\textsuperscript{1} \quad Seongtae Hong\textsuperscript{1} \quad Dongyub Jude Lee\textsuperscript{2} \quad Chanjun Park\textsuperscript{3} \\
\bfseries Jaehyung Seo\textsuperscript{4} \quad Sugyeong Eo\textsuperscript{5}\thanks{Corresponding authors.} \quad Heuiseok Lim\textsuperscript{1}\footnotemark[1] \\[4pt]
{\normalsize\mdseries \textsuperscript{1}Korea University \quad \textsuperscript{2}Zoom Communications \quad \textsuperscript{3}Soongsil University} \\
{\normalsize\mdseries \textsuperscript{4}Konkuk University \quad \textsuperscript{5}Yonsei University} \\[2pt]
{\small\mdseries \texttt{\{omanma1928,ghdchlwls123,limhseok\}@korea.ac.kr} \quad \texttt{jude.lee@zoom.us}} \\
{\small\mdseries \texttt{chanjun.park@ssu.ac.kr} \quad \texttt{seojae777@konkuk.ac.kr} \quad \texttt{s.eo@yonsei.ac.kr}}}
\else
\author{Anonymous ACL submission}
\fi

\begin{document}
\maketitle

\begin{abstract}
Speculative decoding accelerates generation without changing its output, but on vision-language models (VLMs) a self-reinforcing cycle holds it back. Because an autoregressive drafter pays a sequential pass for each drafted token, it must stay small and can ill afford to attend to the image at each pass. Prior work therefore compresses or hides the image, leaving the drafter weakest on the text the image determines. We present GLANCE, a one-pass block drafter that breaks this cycle on an unmodified VLM target. Its block-diffusion head drafts a whole block in one forward pass over the target's already fused vision-language states, reading the multimodal context once, however deep the draft. The target verifies a wide candidate tree in one pass and commits exactly its greedy output. In one production engine at a fixed round budget, GLANCE decodes up to 3.05 times faster than autoregressive decoding and outpaces the production EAGLE3-VL head on average and by about 11\% on grounded tasks. An entropy law explains when drafting pays, predicting the longest accepted blocks on grounded tasks, where the target's next-token entropy is lowest.\ifglancepublic\ Our code is available at \url{https://github.com/js-lee-AI/GLANCE}.\fi
\end{abstract}

\section{Introduction}
\label{sec:intro}

Vision-language models (VLMs) increasingly produce long text about an image. They describe photographs, read scanned documents, and pull values off charts~\citep{qwen3vl2025,bai2025qwen25vl}. Generation remains autoregressive, so every output token needs a full forward pass of the target model, and at small batch sizes this decode loop is bound by memory bandwidth rather than compute. The visual encoder, in contrast, runs once during prefill, and in our workloads the decode loop takes most of the end-to-end time. Faster grounded generation therefore has to come from the decode loop.

\begin{figure}[t]
\centering
\includegraphics[width=0.97\columnwidth]{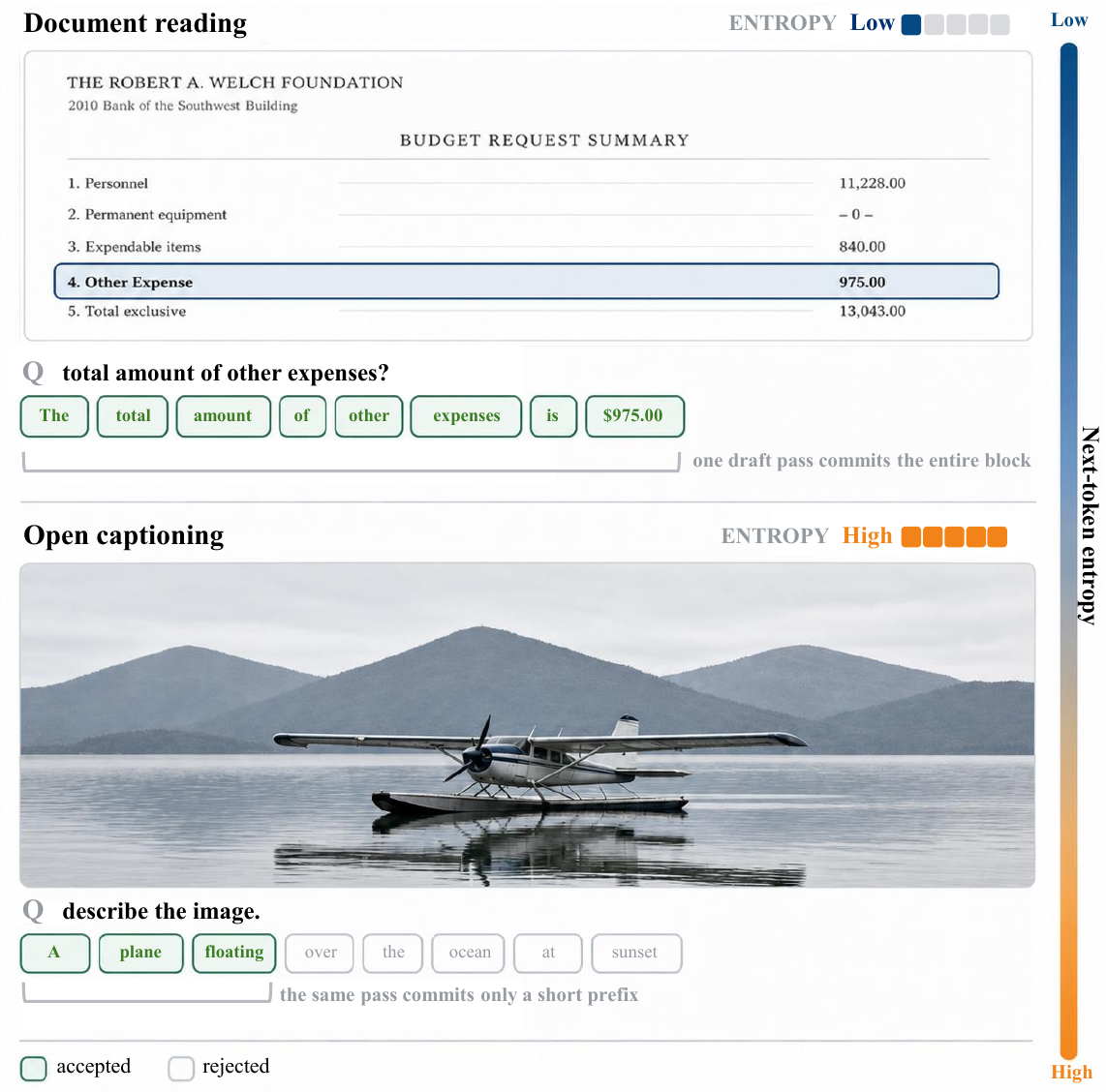}
\caption{Draftability of grounded and open generation. On a document (top), the image pins the answer, entropy is low, and one draft pass commits the entire block. Open captioning (bottom) admits many continuations, entropy is high, and the same pass commits only a short prefix.}
\label{fig:teaser}
\end{figure}

Speculative decoding shortens this loop without changing the output. A cheap drafter proposes several future tokens, and the target verifies them in one forward pass, committing the longest prefix it agrees with~\citep{leviathan2023fast,chen2023accelerating}. Modern drafters are lightweight heads that read the target's own hidden features and propose a tree of candidates~\citep{cai2024medusa,li2024eagle,li2024eagle2,li2025eagle3,miao2024specinfer}, and vision-language serving stacks now ship such heads for VLM targets~\citep{aqmedai2025eagle3vl}.

Adapting this recipe to VLMs, however, has led into a cycle. The drafter is autoregressive, so a candidate $k$ tokens deep costs $k$ sequential draft passes, and the drafter is kept small so that those passes stay cheap. Each pass also attends over the whole multimodal context, in which image tokens usually outnumber text tokens, and a drafter that looks at the image pays for it again at every step. Heads that keep the full context, such as the production EAGLE3-VL head~\citep{aqmedai2025eagle3vl}, therefore keep their drafts shallow, and methods built for VLMs compress, prune, or hide the image tokens from the drafter~\citep{gagrani2024speculative,lin2025vispec,wang2025specvlm,chen2025hivis}. In both cases drafts stay short, and in the second the drafter also loses the image evidence that grounded text depends on.

On grounded workloads, however, the image makes drafting work. When a VLM answers a question about a document, much of its output already appears on the page, and the next token is often nearly deterministic and frequently copied verbatim, as Figure~\ref{fig:teaser} illustrates. We make this precise with an entropy law, under which the expected accepted length falls with the target's next-token entropy, and grounded tasks concentrate at the low-entropy end.

\begin{figure*}[t]
\centering
\includegraphics[width=0.93\textwidth]{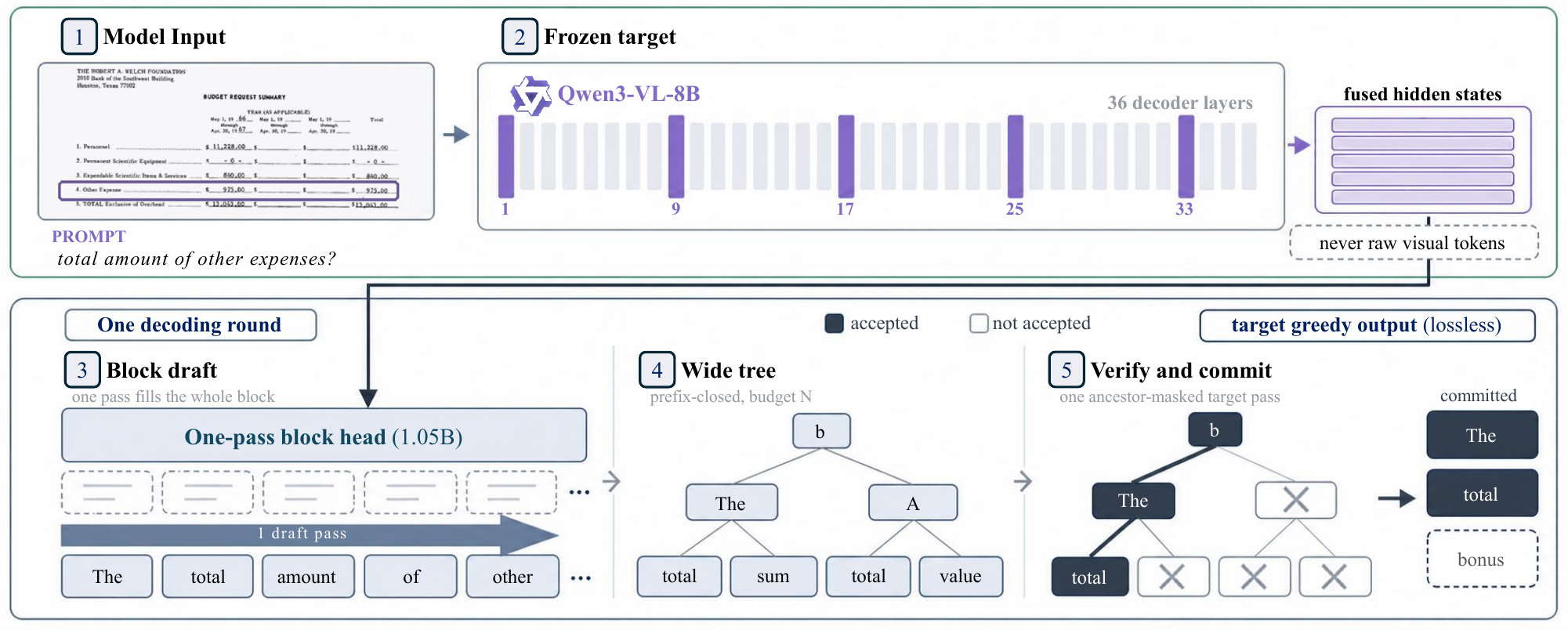}
\caption{Overview of GLANCE. The block head (top) reads the frozen target's fused vision-language states at five layers, never raw visual tokens. In one round (bottom), one draft pass fills the block, the top prefixes form a candidate tree, and one target pass verifies it.}
\label{fig:framework}
\end{figure*}

To exploit this predictability, we propose long and wide candidate sets in a single draft pass. We present GLANCE (grounded block drafting with one-pass candidate expansion). A block-diffusion head~\citep{chen2026dflash,arriola2025blockdiffusion} reads the frozen target's fused vision-language states and, in a single forward pass, produces a distribution for every position of a block. The highest-scoring prefixes form a wide candidate tree, and the target verifies all of them in one pass. Because the head has already paid for the whole block, width costs no further draft passes~\citep{ringel2026ddtree,zhang2026caddtree}. The committed text is exactly the target's greedy output, which we prove and also verify in fp32 on every audited prompt. One-pass block drafting has so far been demonstrated only on text-only language models, while lossless drafters built for VLMs have remained autoregressive, and GLANCE is the first one-pass block drafter that is lossless on an unmodified VLM target.

Inside one production engine at a fixed round budget, GLANCE decodes up to $3.05\times$ faster than autoregressive decoding on chart question answering and outpaces the production EAGLE3-VL head on average and by about $11\%$ on every grounded task, using one draft pass where that head uses eight. Trained on the same corpus and schedule as an EAGLE-3 head and run under the same tree, it decodes $4.5$ to $25.7\%$ faster on all five tasks, and it keeps its lead when retrained on the corpus of ViSpec, the strongest published VLM drafter. Moreover, the form of the entropy law carries over to other VLM targets and to speech, code, and chat.

This paper makes three contributions.
\begin{itemize}
\item We diagnose why speculative decoding has underdelivered on VLMs. Autoregressive drafting and reduced vision access reinforce each other, and together they forgo the long verbatim runs of grounded generation, which a one-pass drafter collects in a single pass.
\item We introduce GLANCE, the first one-pass block drafter that is lossless on an unmodified VLM target. Its head drafts a whole block in one pass from the target's fused vision-language states, a wide tree drawn from that pass is verified in one target pass, and we build it into SGLang, where it runs against the production head under one engine and one round budget.
\item We establish an entropy law of draftability and test it with a mismatched-image intervention and a grounded-tail analysis on five tasks, three VLM targets, and four settings beyond vision.
\end{itemize}

\section{Related Work}
\label{sec:related}

\paragraph{Draft heads and tree verification.}
Speculative decoding accepts drafted tokens with a rejection rule that provably preserves the target distribution~\citep{leviathan2023fast,chen2023accelerating}. The practical line drafts from the target's own hidden features with a lightweight head. Medusa attaches independent heads for each future offset~\citep{cai2024medusa}, Hydra makes them sequentially dependent~\citep{ankner2024hydra}, and EAGLE drafts autoregressively at the feature level over dynamic candidate trees~\citep{li2024eagle,li2024eagle2}, which EAGLE-3 extends with training-time test and multi-layer feature fusion~\citep{li2025eagle3}. Many candidates can be verified at once by packing a prefix tree into a single ancestor-masked target pass~\citep{miao2024specinfer}, and later work studies tree shape~\citep{chen2024sequoia,wang2024opttree}, theory, and benchmarking~\citep{yin2024theoretical,huang2024specdecpp,xia2024specbench}.

\paragraph{Speculative decoding for VLMs.}
The first study of speculative decoding for multimodal models found a text-only drafter to be a strong baseline~\citep{gagrani2024speculative}. Subsequent work follows two routes. One shrinks what the drafter reads, by compressing image tokens into a few adaptor embeddings~\citep{lin2025vispec}, pruning visual tokens~\citep{wang2025specvlm}, pruning video tokens under verifier guidance~\citep{ji2025specvlmvideo}, or hiding them from the drafter altogether~\citep{chen2025hivis}. The other gives a small drafter its own cheap path to vision, through multimodal distillation~\citep{liu2025massv}, dynamic draft trees~\citep{huo2025specllava}, or refined target features fused by cross-attention~\citep{wang2025dream}. DREAM uses entropy to weight that fusion inside an autoregressive drafter, whereas we use entropy to predict acceptance and draft in one pass. Semi-autoregressive multimodal drafting relaxes the one-token-a-pass constraint but gives up exactness~\citep{specflash2025}, and a benchmark now covers the setting~\citep{mmspec2026}. All of these methods keep an autoregressive drafter and engineer its access to vision, whereas GLANCE inherits vision from the target and drafts in one pass.

\paragraph{Block drafting and diffusion drafters.}
A parallel line removes the sequential bottleneck inside the drafter. It includes lookahead embeddings~\citep{monea2023pass}, Jacobi decoding~\citep{fu2024lookahead}, block-parallel discrete diffusion language models~\citep{nie2025llada,arriola2025blockdiffusion}, diffusion drafters verified by an autoregressive target~\citep{christopher2025specdiff,li2025diffuspec,cheng2025deer}, and DFlash, a block-diffusion head on the frozen target's hidden states~\citep{chen2026dflash}. DDTree and CaDDTree show that one-pass drafting makes tree width cheap for text-only targets~\citep{ringel2026ddtree,zhang2026caddtree}, and later text-only work builds on such drafters~\citep{dflare2026,wu2026dpace,zhang2026dflow,wang2026xpress,li2026dartree,kwon2026whiflash}. Block drafting has reached VLMs only by converting the target itself into a self-speculating diffusion model~\citep{wu2026fastdvlm,zhang2026fastddrive}, which changes the model's output. GLANCE instead keeps the target frozen and its output exact.

\section{GLANCE}
\label{sec:method}

Figure~\ref{fig:framework} shows one decoding round of GLANCE. A block head drafts a whole block of future tokens in one forward pass, and the target verifies a wide tree of those candidates in one forward pass and commits its own greedy prefix. Only the head is trained, and the target stays frozen.

\subsection{Speculative Decoding and Acceptance}
\label{sec:prelim-sd}
Let $p(\cdot\mid x)$ be the frozen target's next-token distribution given the multimodal prefix $x$, which holds the text tokens and the encoded image. Greedy decoding emits $\argmax_w p(w\mid x)$, one token for each target forward pass. Speculative decoding instead proceeds in \emph{rounds}~\citep{leviathan2023fast,chen2023accelerating}. Given the committed prefix and a pending \emph{root} token $b$, a drafter proposes candidate continuations, and the target commits the longest prefix that matches its own greedy continuation $Y=(Y_1,Y_2,\dots)$ after $b$, where $Y_k=\argmax_w p(w\mid x\circ b\circ Y_{1:k-1})$. When $\acc$ nonroot tokens are accepted, the round commits $\acc+1$ tokens, namely the accepted prefix and one more token read off the last verified distribution. We report the mean acceptance length
\begin{equation}
\label{eq:mat}
\tau \;=\; \E[\acc+1]\;=\;\E[\acc]+1 ,
\end{equation}
so autoregressive decoding has $\tau{=}1$. For systems that log $\E[\acc]$ instead, we add the committed token so that all rows share one scale. A larger $\tau$ means fewer target passes for each generated token, and the wall-clock speedup is $\tau$ discounted by the cost of drafting and verification.

\subsection{One-Pass Drafting on Fused States}
\label{sec:method-drafter}
GLANCE drafts with a block-diffusion head~\citep{chen2026dflash} of $5$ layers, $1.05$B parameters, and block size $B{=}16$, attached to a frozen Qwen3-VL-8B target. The first block position holds the pending root $b$ and the other $B-1$ positions are masked, so one forward pass returns a marginal over the vocabulary at every offset $j=1,\dots,\Lmax$ with $\Lmax=B-1=15$. The head never reads raw visual tokens. Its cross-attention keys and values are the target's hidden states at five layers spread over the depth of the stack, $\{1,9,17,25,33\}$~\citep{li2025eagle3}. By the time these states exist, the target has already merged the image into its text representation, so the drafter inherits visual grounding without an encoder, an adaptor, or compressed image tokens of its own.

\begin{assumption}[One-pass marginal interface]
\label{asm:onepass}
Conditioned on the cached prefix $x$ and the pending root $b$, the drafter returns one marginal $q_j(\cdot\mid x,b)$ at every offset $j=1,\dots,\Lmax$ in a single forward pass, before any token of the block is committed. A candidate prefix is ranked by its plug-in score $\hat\pi(y_{1:\ell})=\prod_{j\le\ell}q_j(y_j\mid x,b)$.
\end{assumption}

An autoregressive drafter pays for depth with sequential passes, each attending over the full multimodal context, and the production EAGLE3-VL head takes three passes a round in its released configuration and eight in the production-engine comparison below. The block head pays one pass for the whole block, and the image enters that pass only through states the target has already computed.

\subsection{Wide-Tree Verification and Exactness}
\label{sec:method-tree}

\begin{definition}[Candidate tree and accept rule]
\label{def:tree}
A \emph{candidate tree} $\Tree$ is a prefix-closed set of nonempty strings after the root $b$, of depth at most $\Lmax$ and \emph{budget} $\budget=|\Tree|$. Packed into one ancestor-masked target pass, the row of a depth-$d$ node $y_{1:d}$ yields exactly $p(\cdot\mid x\circ b\circ y_{1:d})$. With $Y$ the target's greedy continuation, the \emph{accepted length} is $\Acc(\Tree)=\max\{k\ge0: Y_{1:j}\in\Tree\ \text{for all } j\le k\}$. The walk commits $Y_{1:\Acc(\Tree)}$ and emits the next target-greedy token from the last verified row as the new root.
\end{definition}

The tree builder keeps the $\budget$ prefixes with the highest plug-in score $\hat\pi$, and the kept set is prefix-closed because no prefix scores below its extensions~\citep{ringel2026ddtree,zhang2026caddtree}. The tree is packed into one ancestor-masked target pass~\citep{miao2024specinfer}, the decoder walks it along the target's greedy tokens, and Appendix~\ref{app:algo} gives the complete round as pseudocode. Growing $\budget$ adds verifier work inside that single pass but no draft passes. We use $\budget{=}63$ in our Hugging Face implementation and a $31$-node tree in the production engine, where together with the root GLANCE verifies the same $32$ draft tokens a round as the production head.

\begin{theorem}[Lossless greedy equivalence, informal]
\label{thm:lossless}
Assume the target's argmax is unique at every visited prefix and its top-$2$ logit gap there exceeds the numerical difference between packed and unpacked evaluation. Then, for any candidate tree, the walk of Definition~\ref{def:tree} commits exactly the target's greedy autoregressive sequence, token for token.
\end{theorem}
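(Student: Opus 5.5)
The plan is to argue by induction over rounds, with an inner induction along the tree walk inside each round. The invariant is: at the start of every round, the committed prefix together with the pending root $b$ equals a prefix of the target's greedy autoregressive sequence. It holds initially because the first root is the greedy token read off the prefill distribution, and that distribution is computed by the same unpacked pass as greedy decoding. The inductive step has to show two things: the tokens committed in one round are exactly the next $\Acc(\Tree)+1$ greedy tokens, and the new root is the next greedy token after them.

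For one round I would fix $x$ and $b$ and let $Y$ be the greedy continuation of Definition~\ref{def:tree}. The inner induction is on the depth $d=0,1,\dots,\Acc(\Tree)$. The claim at depth $d$ is that the walk sits at node $Y_{1:d}$, where $d=0$ means the root row, and that the argmax it reads from that row equals $Y_{d+1}$. Definition~\ref{def:tree} says the packed row of node $y_{1:d}$ yields $p(\cdot\mid x\circ b\circ y_{1:d})$. So in exact arithmetic its argmax is $Y_{d+1}$ by the definition of $Y$, and uniqueness of the argmax makes this well defined. The walk then moves to the child $Y_{1:d+1}$ whenever it lies in $\Tree$.

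By the definition of $\Acc(\Tree)$, every $Y_{1:j}$ with $j\le\Acc(\Tree)$ lies in $\Tree$ and $Y_{1:\Acc(\Tree)+1}$ does not. Prefix-closure guarantees that the walk can only move along existing edges and never skips a level. Hence the walk stops exactly at depth $\Acc(\Tree)$, commits $Y_{1:\Acc(\Tree)}$, and emits $Y_{\Acc(\Tree)+1}$ as the new root, which is what the invariant requires. Since each round commits at least one token, the two sequences agree token for token to any length. None of this uses how $\Tree$ was built, so the statement holds for any candidate tree, in particular the top-$\budget$ tree built from the scores $\hat\pi$ of Assumption~\ref{asm:onepass}.

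I expect the main obstacle to be the numerical step. Packed evaluation uses ancestor masks and a different kernel ordering, so its logits differ from the unpacked ones that define greedy decoding. I would make the hypothesis precise as $\|\ell^{\text{packed}}-\ell^{\text{unpacked}}\|_\infty\le\varepsilon$ at each visited row, together with a top-$2$ gap strictly greater than $2\varepsilon$. Then every competitor's logit can rise by at most $\varepsilon$ while the true argmax drops by at most $\varepsilon$, so the argmax is unchanged, and this is applied at each visited row. One subtlety must be handled: the visited rows are defined along the true greedy path $Y$, so the inner induction has to establish argmax agreement before it uses it to choose the next row. A second subtlety is the KV cache of committed tokens, which was also produced by packed passes. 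I would state the perturbation bound for the full logit computation at each visited prefix, which absorbs cache error, rather than try to track cache error layer by layer.
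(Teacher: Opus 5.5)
Your proposal is correct and follows the paper's proof. It argues that within a round every token the walk produces, whether accepted or emitted as the new root, is the target's argmax at its own prefix. It then inducts over rounds through the root hand-off, and confines any failure to near-ties via the logit-gap hypothesis. Your explicit condition $\gamma>2\varepsilon$ under a sup-norm bound $\varepsilon$ is the careful version of the paper's ``gap exceeds the perturbation,'' which as worded preserves the argmax only if that perturbation is measured on logit differences rather than on individual logits.
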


We call a run \emph{bitwise identical} to greedy decoding when its committed token ids equal those of the target's own greedy decoding at every position. A weak head can only shorten the accepted prefixes, and no fallback path is needed, because every committed token is the argmax of a target row. We nevertheless measure it instead of assuming it, and the full statement and proof are in Appendix~\ref{app:theory}. In fp32, GLANCE is bitwise identical to greedy decoding on all $60$ audited prompts. In bf16, the target does not reproduce its own greedy output across two runs even without any drafter, because kernel choice perturbs near-ties, and GLANCE reproduces that output at least as often as a second run of the target does. Exactness is therefore decided in fp32, and Appendix~\ref{app:impl} reports both audits for every system.

\subsection{Training}
\label{sec:method-training}
Only the block head is trained, while the vision tower and the target stay frozen. The training rows are the target's own greedy generations on $8{,}000$ COCO-Caption2017 and $8{,}000$ TextVQA prompts. The objective is top-$1$ coverage of the target's token at every block offset, with offset $j$ weighted by $0.8^{j}$, and a reveal scheme exposes a random prefix of the block so that the head learns every reveal length. The head is initialized from a released text-only block head for Qwen3-8B~\citep{chen2026dflash} and trained for one epoch on one GPU. The recipe contains no document, infographic, or chart data. The full training card is in Appendix~\ref{app:impl}.

\section{An Entropy Law of Draftability}
\label{sec:law}

Which workloads does a one-pass drafter serve best? We answer with one scalar, the target's next-token entropy $\HH$ at the round root. Entropy has been used to gate or stop drafting~\citep{agrawal2024adaedl,sage2026,mahmoud2026acceptance}, and we give its relation to acceptance a closed form that the experiments then test.

We model the matches along a block as a survival process. The accepted run continues at offset $j$ with probability $\pj$, and we approximate these hazards by a single match probability $\pmatch(\HH)$ that varies slowly with the offset. Taking the match logit to be affine in the entropy gives
\begin{equation}
\label{eq:law}
\begin{aligned}
\logit\pmatch(\HH)&=b_0-b_1\HH,\\
\Eacc\;&\approx\;\sum_{\ell=1}^{\Lmax}\pmatch^{\ell}
=\frac{\pmatch\,(1-\pmatch^{\Lmax})}{1-\pmatch},
\end{aligned}
\end{equation}
with block horizon $\Lmax{=}15$. Here $b_0$ is the zero-entropy intercept and $b_1>0$ the entropy slope, both fitted on each task's round log. The truncation matters only near $\HH{=}0$, since the untruncated mean $\pmatch/(1-\pmatch)$ exceeds the truncated one by the relative amount $\pmatch^{\Lmax}$, which is below $4\%$ whenever $\pmatch\le0.8$. Appendix~\ref{app:theory} derives Equation~\eqref{eq:law}, grounds its monotone part in Fano's inequality, and bounds how much round-level variance any entropy-only predictor can explain.

Equation~\eqref{eq:law} describes the top-$1$ path of a draft. A wider tree adds siblings at every depth and so raises acceptance at every entropy, and we measure this gain to be a nearly constant factor across tasks.

Two predictions follow. First, $\Eacc$ strictly decreases in $\HH$, hence any workload that systematically lowers the target's entropy yields longer accepted blocks. Grounded generation is such a workload, because it copies determinate strings such as glyphs, numbers, and answer spans from the image. Second, the law has a ceiling that grounded generation breaks.

\begin{theorem}[Grounded tail, informal]
\label{thm:tail}
Equation~\eqref{eq:law} caps the near-certain mean at $e^{b_0}$. If near-zero-entropy rounds enter, with probability $\pi$, a verbatim-copy state whose matches persist, the mean as $\HH\to0$ becomes $\pi\Lmax+(1-\pi)M$, where $M$ is the mean of the ordinary state, and it exceeds $e^{b_0}$ once $\pi$ passes an explicit threshold.
\end{theorem}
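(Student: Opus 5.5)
The argument has three steps: the cap, the mixture limit, and the threshold.

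\emph{The cap.} We take Equation~\eqref{eq:law} at face value and study its behaviour as $\HH\to0$. The logit is affine, so $\pmatch(0)=\sigm(b_0)$ and therefore
\[
\frac{\pmatch}{1-\pmatch}=e^{b_0-b_1\HH}.
\]
The truncated geometric sum is at most its untruncated version $\pmatch/(1-\pmatch)$. Hence $\Eacc\le e^{b_0-b_1\HH}\le e^{b_0}$ for every $\HH\ge0$, using $b_1>0$. This gives the cap, and it is attained in the limit $\HH\to0$ up to the factor $1-\pmatch^{\Lmax}$ discussed after Equation~\eqref{eq:law}.

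\emph{The mixture limit.} We formalize the grounded tail as a two-state latent mixture at each round. With probability $\pi(\HH)$ the round is in a copy state, where every hazard satisfies $\pj=1$ up to the horizon. With probability $1-\pi(\HH)$ it is in the ordinary state, whose mean accepted length is $M(\HH)$; this is the law's own prediction. We condition on the state and apply the tower property:
\[
\Eacc=\pi(\HH)\,\E[\acc\mid\HH,\text{copy}]+(1-\pi(\HH))\,M(\HH).
\]
In the copy state the survival sum $\sum_{\ell\le\Lmax}\prod_{j\le\ell}\pj$ equals $\Lmax$ exactly, because acceptance is truncated at the block horizon. We then assume $\pi(\HH)\to\pi$ and $M(\HH)\to M$ as $\HH\to0$, and we need $M$ to be bounded, which holds because $M\le\Lmax$. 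Taking the limit of the identity gives $\pi\Lmax+(1-\pi)M$.

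\emph{The threshold.} We need $\pi\Lmax+(1-\pi)M>e^{b_0}$. This rearranges to
\[
\pi>\pi^\star=\frac{e^{b_0}-M}{\Lmax-M},
\]
provided $M<\Lmax$. If $M\le e^{b_0}$, as holds when the ordinary state follows the law, then $\pi^\star\in[0,1)$ whenever $e^{b_0}<\Lmax$. If instead $e^{b_0}\ge\Lmax$, the cap is not binding and the claim is vacuous. We will state this case separately.

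\emph{Main obstacle.} The main difficulty is not the algebra but making the model precise. We must say what it means for matches to persist: the hazards $\pj$ equal $1$ in the copy state, conditionally on the latent state. We also need the limits as $\HH\to0$ to exist, and we must make sure the fitted $b_0$ describes only the ordinary state, so that the comparison with $e^{b_0}$ is not circular. I would handle this by defining the copy state as an explicit latent variable. For the near-zero-entropy regime I would use conditional expectations on shrinking bins $\HH\in[0,\epsilon)$ rather than pointwise conditioning, and then let $\epsilon\to0$.
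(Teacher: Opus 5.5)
Your proof is correct, and its first and third steps follow the paper. Part~(A) there also bounds the truncated sum by $p/(1-p)$ at $p=\sigm(b_0)$, and the threshold $\pi^\star$ is placed in $(0,1)$ by essentially the same inequalities, $M<e^{b_0}<\Lmax$.

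The middle step takes a different route. The paper models a round as a two-state hidden Markov chain over $\{D,U\}$. It has entry law $(\pi,1-\pi)$, match probabilities $\rho_D,\rho_U$, and a transition matrix $T$ applied after each match. The paper derives $\Prob[\acc\ge\ell\mid\HH]=\mathbf v D_{\boldsymbol\rho}K^{\ell-1}\mathbf 1$ with $K=TD_{\boldsymbol\rho}$, and only then lets $\rho_D,t_{DD}\to1$. You instead posit that endpoint directly, as a static per-round mixture whose copy component has every hazard equal to $1$, and you condition on the state.

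Your route is shorter and cleaner at the limit. The paper's proof writes $\Prob[\acc\ge\ell]\to\pi$, which records only the $D$-branch, whereas your tower-property identity carries the $(1-\pi)M$ term explicitly. Your mixture also builds in that the ordinary state never switches into the copy state mid-block. The paper's formula tacitly needs this too: with $t_{UD}>0$, $M(\rho_U)$ only lower-bounds the ordinary branch, so the threshold remains sufficient but is no longer an exact characterization.

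What the chain buys is a model that is non-degenerate before the limit, with a hazard that varies with the offset. The paper needs exactly that in two places: part~(C), where in-regime maximum-likelihood parameters recover the DocVQA floor mean of $5.23$, and the held-out test against the geometric law in Proposition~\ref{prop:falsify}. Your version proves only the limiting claim, which is all the informal statement asserts.
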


The copy regime is where one-pass drafting gains the most, because a verbatim run of length $\ell$ costs an autoregressive drafter $\ell$ sequential passes and a block drafter one. Appendix~\ref{app:theory} states the two-state model behind Theorem~\ref{thm:tail} and tests it on prompt-disjoint splits.

\begin{table*}[t]
\centering
\setlength{\tabcolsep}{4.1pt}\footnotesize
\begin{tabular}{lccccccccc}
\toprule
 & & & \multicolumn{3}{c}{EAGLE3-VL, eight draft passes} & \multicolumn{3}{c}{GLANCE, one draft pass} & \\
\cmidrule(lr){4-6}\cmidrule(lr){7-9}
Task & $\bar{\HH}$ & AR ms/tok & $\tau\ \uparrow$ & ms/tok $\downarrow$ & speedup $\uparrow$ & $\tau\ \uparrow$ & ms/tok $\downarrow$ & speedup $\uparrow$ & GLANCE faster by \\
\midrule
\grp{10}{Higher-entropy tasks}
Captioning & $0.47$ & $24.72$ & $\mathbf{3.51}$ & $\mathbf{11.54}$ & $\mathbf{2.14\times}$ & \g{$2.92$} & \g{$13.10$} & \g{$1.89\times$} & $-11.9\%$ \\
TextVQA & $0.39$ & $24.34$ & $\mathbf{4.31}$ & $\mathbf{9.56}$ & $\mathbf{2.55\times}$ & \g{$3.57$} & \g{$10.76$} & \g{$2.26\times$} & $-11.2\%$ \\
\grp{10}{Lower-entropy tasks}
InfographicVQA & $0.31$ & $24.58$ & $3.51$ & $11.99$ & $2.05\times$ & \g{$\mathbf{3.65}$} & \g{$\mathbf{10.84}$} & \g{$\mathbf{2.27\times}$} & $\mathbf{+10.6\%}$ \\
DocVQA & $0.15$ & $24.91$ & $3.68$ & $11.67$ & $2.14\times$ & \g{$\mathbf{3.91}$} & \g{$\mathbf{10.51}$} & \g{$\mathbf{2.37\times}$} & $\mathbf{+11.0\%}$ \\
ChartQA & $0.19$ & $24.18$ & $4.50$ & $8.79$ & $2.75\times$ & \g{$\mathbf{4.69}$} & \g{$\mathbf{7.93}$} & \g{$\mathbf{3.05\times}$} & $\mathbf{+10.8\%}$ \\
\midrule
Geometric mean & & & & & $2.31\times$ & & & \g{$\mathbf{2.34\times}$} & $\mathbf{+1.3\%}$ \\
\bottomrule
\end{tabular}
\caption{GLANCE against the production EAGLE3-VL head in SGLang, with engine, GPU, and a tree of $32$ draft tokens shared, filled in eight passes by EAGLE3-VL and one by GLANCE. $\bar{\HH}$ is the target's mean entropy in nats. Blue marks GLANCE.}
\label{tab:sglang}
\end{table*}

\section{Experiments}
\label{sec:exp}

\subsection{Setup}
\label{sec:exp-setup}

\paragraph{Models and tasks.}
The target is Qwen3-VL-8B-Instruct~\citep{qwen3vl2025}, kept frozen. We evaluate five tasks ordered from open-ended to grounded, namely COCO captioning~\citep{lin2014coco}, TextVQA~\citep{singh2019textvqa}, InfographicVQA (InfoVQA)~\citep{mathew2022infographicvqa}, DocVQA~\citep{mathew2021docvqa}, and ChartQA~\citep{masry2022chartqa}. We call the last three, whose answers are read off a document or chart, the grounded tasks, and they are the three tasks of lowest mean target entropy in Table~\ref{tab:sglang}. The rest of the standard VLM suite is multiple-choice or single-word and leaves no decode loop to shorten. Captioning and TextVQA prompts for GLANCE come from the test split of each source, which is disjoint from its training rows, and the other three tasks appear in none of the primary head's training data.

\paragraph{Baselines.}
Following the EAGLE series~\citep{li2024eagle,li2024eagle2,li2025eagle3}, we compare against released methods that preserve the target's output. These are n-gram prompt lookup, classic two-model speculation with image-conditioned and text-only drafts, the production EAGLE3-VL head~\citep{aqmedai2025eagle3vl}, and ViSpec~\citep{lin2025vispec} together with the EAGLE-2 and Medusa baselines of its codebase. No published VLM drafter releases a head for our target, so we train the ViSpec-codebase heads ourselves with the official recipe.

\paragraph{Protocol.}
Decoding is batch one and greedy, with up to $256$ new tokens. The acceptance length $\tau$ counts tokens rather than time and thus compares systems across engines. Wall-clock speedup is the ratio of the mean decode time for a token, autoregressive over speculative, with both arms measured in one engine on one GPU, and we never compare speedups across engines. Prompt counts, hardware, and results under sampling at temperature $1$ are in Appendix~\ref{app:impl}.

\begin{table*}[t]
\centering
\setlength{\tabcolsep}{4.6pt}\footnotesize
\begin{tabular}{lccccccc}
\toprule
 & & \multicolumn{5}{c}{acceptance length $\tau\ \uparrow$} & \\
\cmidrule(lr){3-7}
Method (draft passes a round) & Params & Captioning & TextVQA & InfoVQA & DocVQA & ChartQA & Lossless \\
\midrule
\grp{8}{Qwen3-VL-8B, training-free and two-model drafting}
n-gram lookup (0) & $0$ & $1.29$ & $2.49$ & $2.53$ & $3.30$ & $2.57$ & $\checkmark$ \\
Classic SD, Qwen3-VL-4B (8) & $4.4$B & $\mathbf{3.53}$ & $\mathbf{3.79}$ & $\mathbf{3.95}$ & $\mathbf{4.39}$ & $\mathbf{4.47}$ & $\checkmark$ \\
Classic SD, Qwen3-1.7B text-only (8) & $2.0$B & $1.49$ & $1.42$ & $1.90$ & $1.68$ & $2.12$ & $\checkmark$ \\
\grp{8}{Qwen3-VL-8B, trained draft heads}
EAGLE3-VL, production (5) & $0.40$B & $2.13$ & $2.66$ & $2.48$ & $2.56$ & $2.92$ & $\checkmark$ \\
EAGLE-2, ViSpec codebase (3) & $0.23$B & $2.41$ & $2.45$ & $2.38$ & $2.54$ & $2.89$ & $\checkmark^{\dagger}$ \\
ViSpec, official recipe (3) & $0.31$B & $2.45$ & $2.43$ & $2.45$ & $2.46$ & $2.95$ & $\checkmark^{\dagger}$ \\
Medusa, ViSpec codebase (1) & $0.08$B & $1.51$ & $1.52$ & $1.47$ & $1.53$ & $1.61$ & $\checkmark^{\dagger}$ \\
\rowcolor{oursrow} GLANCE (1) & $1.05$B & $\mathbf{3.09}$ & $\mathbf{3.46}$ & $\mathbf{3.75}$ & $\mathbf{3.76}$ & $\mathbf{5.12}$ & $\checkmark^{\dagger}$ \\
\grp{8}{Qwen3-VL-8B, matched training with one corpus, target, batch, schedule, and framework}
EAGLE-3 head, depth-3 chain (3) & $0.40$B & $1.62$ & $1.59$ & $1.55$ & $1.58$ & $1.87$ & $\checkmark$ \\
\rowcolor{oursrow} GLANCE, budget-63 tree (1) & $1.05$B & $\mathbf{4.05}$ & $\mathbf{3.97}$ & $\mathbf{3.90}$ & $\mathbf{4.13}$ & $\mathbf{7.44}$ & $\checkmark^{\dagger}$ \\
\grp{8}{Qwen3-VL-8B, GLANCE retrained on ViSpec's corpus}
\rowcolor{oursrow} GLANCE, 1 epoch (1) & $1.05$B & $3.02$ & $3.44$ & $3.75$ & $3.78$ & $5.02$ & $\checkmark^{\dagger}$ \\
\rowcolor{oursrow} GLANCE, 21 epochs (1) & $1.05$B & $3.04$ & $3.49$ & $3.78$ & $3.85$ & $5.07$ & $\checkmark^{\dagger}$ \\
\grp{8}{ViSpec's home target Qwen2.5-VL-7B}
ViSpec, released head (3) & $0.35$B & $3.34$ & $\mathbf{3.26}$ & $3.21$ & $3.04$ & $3.59$ & $\checkmark^{\dagger}$ \\
\rowcolor{oursrow} GLANCE (1) & $1.23$B & $\mathbf{4.00}$ & $2.61$ & $\mathbf{3.47}$ & $\mathbf{3.12}$ & $\mathbf{4.72}$ & $\checkmark^{\dagger}$ \\
\bottomrule
\end{tabular}
\caption{Acceptance length of released drafters and of GLANCE, each at its own operating point, GLANCE with the budget-$63$ tree. Params counts drafter weights. Under Lossless, $\checkmark$ marks exact acceptance and $\dagger$ output audited bitwise identical to greedy decoding. Blue marks GLANCE.}
\label{tab:headline}
\end{table*}

\subsection{Head-to-Head in a Production Engine}
\label{sec:exp-sglang}

Table~\ref{tab:sglang} shows GLANCE and the production EAGLE3-VL head inside SGLang $0.5.6$ on one RTX A6000 in bf16, with both CUDA-graph captured and both verifying a tree of $32$ draft tokens a round, EAGLE3-VL its own top-$8$ dynamic tree and GLANCE its prefix tree. Engine, GPU, and round budget are thus shared, and the one structural difference left is how the draft tokens are produced, in eight sequential passes by EAGLE3-VL and in one by GLANCE. This eight-pass tree decodes $18$ to $33\%$ faster than EAGLE3-VL's released configuration of three passes over four draft tokens on every task. On the three lower-entropy tasks, GLANCE is the faster system by $10.6$ to $11.0\%$ and reaches $3.05\times$ the speed of autoregressive decoding on ChartQA. Every paired bootstrap interval excludes zero, GLANCE is faster on at least $81$ of the $101$ prompts of each of these tasks, and it also leads on the five-task geometric mean, by $1.3\%$ with a $95\%$ interval from $0.3$ to $2.2\%$. None of the three tasks appears in GLANCE's training data. The lead on these three tasks holds for two further training seeds and at temperature~$1$.

On captioning and TextVQA, the two tasks with the highest mean entropy $\bar{\HH}$, the eight-pass head leads. The product-of-marginals ranking of Assumption~\ref{asm:onepass} accounts for this split. It is accurate when the tokens of a block are nearly determined by the image, whereas on free-running text an autoregressive drafter stays coherent by construction. Consistently, GLANCE accepts longer blocks on every lower-entropy task than on either higher-entropy task, whereas the production head places TextVQA above both DocVQA and InfographicVQA. The production head's lead on these two tasks also comes from its training, on $400$K ALLaVA-4V samples~\citep{aqmedai2025eagle3vl} against our $16$K. With an EAGLE-3 head and a GLANCE head trained on one corpus and run under the same tree, GLANCE is faster on all five tasks, by $4.5\%$ on captioning, $14.9\%$ on TextVQA, and $16.1$ to $25.7\%$ on the three grounded tasks, as Table~\ref{tab:matchedsglang} shows. On full-resolution InfographicVQA prompts, the margin holds at $6.9\%$ with $2$K tokens of context and at $5.5\%$ with about $6.7$K, and both $95\%$ intervals lie above zero. Details, including the prompt-level intervals, are in Appendix~\ref{app:ledger}.

\begin{figure*}[t]
\centering
\includegraphics[width=\textwidth]{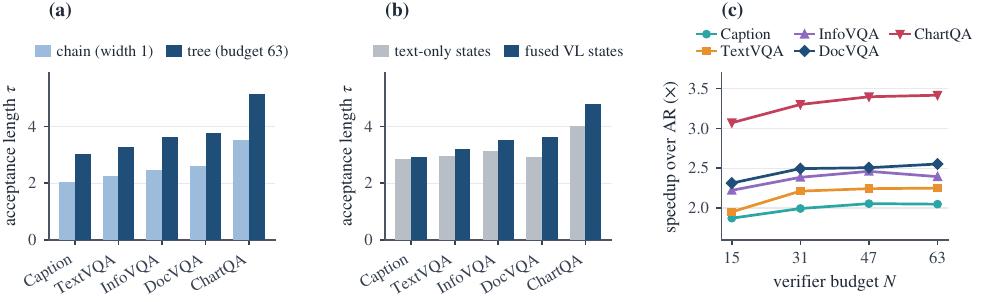}
\caption{Sources of GLANCE's acceptance on Qwen3-VL-8B. (a) The identical head run as a width-1 chain and as a budget-63 tree. (b) The head reading the target's fused states or a text-only language model's states, at budget 31. (c) Speedup against the verifier budget $\budget$.}
\label{fig:systems}
\end{figure*}

\subsection{Acceptance Against Released Drafters}
\label{sec:exp-acceptance}

Table~\ref{tab:headline} reports the acceptance length of every drafter released for these targets, each at its own operating point. Among trained heads, GLANCE accepts the longest blocks on all five tasks. Classic two-model speculation with a $4$B draft accepts more on four of the five tasks, but a draft half the size of the target costs about half a target pass for each drafted token, so it slows decoding on every task, as Appendix~\ref{app:classicsd} shows. The ViSpec codebase isolates what its vision adaptor adds. Trained without the adaptor, the head is exactly the EAGLE-2 drafter, and the two heads differ by at most $0.08$ in acceptance length on any task, so the target's fused states, which both heads read, already carry what the adaptor's compressed image tokens add.

Four controls separate the architecture from its training. First, trained from scratch with one $26$K-row corpus, target, global batch, schedule, and framework~\citep{specforge2026}, GLANCE decodes faster than an EAGLE-3 head on all five tasks under the shared tree of Section~\ref{sec:exp-sglang}, where that head's acceptance length is $2.8$ to $3.5$. This corpus, unlike our primary recipe, contains document and chart rows. Second, retrained on ViSpec's own $68$K-row corpus, at one epoch and at ViSpec's twenty-one, GLANCE stays within one percent of its acceptance under our recipe, pooled over the five tasks, and its lead over the ViSpec family therefore holds on ViSpec's own data and schedule. Third, on ViSpec's home target Qwen2.5-VL-7B, a GLANCE head trained on that target accepts longer blocks than the released ViSpec head on four of the five tasks. Fourth, the released text head that GLANCE starts from, run unchanged, trails the production head on all five tasks of Table~\ref{tab:sglang}, by $15\%$ on the geometric mean, and our training lengthens its accepted blocks by $17$ to $21\%$, which turns that deficit into GLANCE's lead. Details of all four controls are in Appendix~\ref{app:impl}.

\begin{figure*}[t]
\centering
\includegraphics[width=\textwidth]{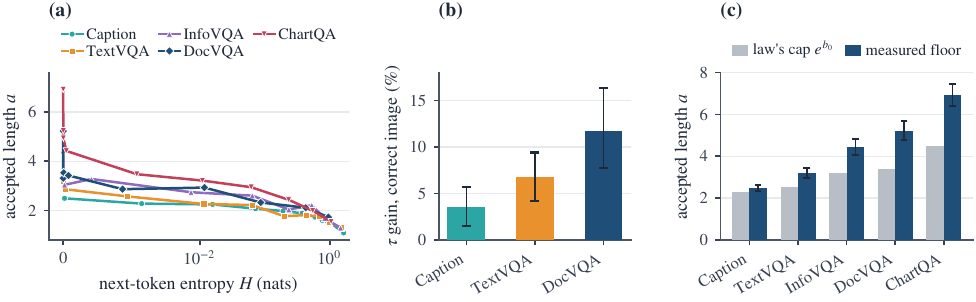}
\caption{Tests of the entropy law on Qwen3-VL-8B. (a) Accepted length against the target's next-token entropy, by entropy decile. (b) Acceptance gain from the correct over a mismatched image, with $95\%$ bootstrap intervals. (c) The fitted law's cap $e^{b_0}$ against the measured lowest-decile mean.}
\label{fig:analysis}
\end{figure*}

\subsection{Where the Acceptance Comes From}
\label{sec:exp-ablations}

\paragraph{Tree against chain.}
GLANCE's head has $2.6\times$ the parameters of EAGLE3-VL's, so a larger head might explain the gains. Panel (a) of Figure~\ref{fig:systems} measures what the tree adds by running the identical head as a width-$1$ chain. The budget-$63$ tree accepts between $1.45$ and $1.49\times$ the chain's length on every task, a nearly constant factor, as the law anticipates. The tree also decodes $1.36\times$ faster than the chain in wall-clock time, net of verifying a wider tree. In SGLang, a GLANCE round is $4$ to $7\%$ shorter than an EAGLE3-VL round on every task, so the larger head drafts all fifteen offsets in one pass for less than the small head pays for eight.

\paragraph{Vision through fused states.}
Panel (b) replaces the target's fused states with those of a text-only Qwen3-8B, which never sees the image and whose states the head was initialized on. Acceptance falls on every task, and it falls most on grounded ones, since the text-only states retain $97\%$ of GLANCE's acceptance on captioning but only $80\%$ on DocVQA. Zeroing the conditioning states collapses $\tau$ to $1.11$, which shows that the head drafts from the target's context.

\paragraph{Width saturates early.}
Panel (c) sweeps the verifier budget. Speedup rises from $\budget{=}15$ to $31$ on every task and then flattens. Width pays the most on ChartQA, the task with the longest accepted blocks. Appendix~\ref{app:ledger} tabulates the sweep and breaks down the cost of a round.

\subsection{Testing the Entropy Law}
\label{sec:exp-law}

\paragraph{Fit across tasks.}
Panel (a) of Figure~\ref{fig:analysis} shows GLANCE's accepted length falling with entropy on all five tasks, with grounded tasks accepting more at nearly every entropy level. Pooled over the $7{,}751$ rounds of captioning, TextVQA, and DocVQA, the law has $b_0{=}0.96$ and $b_1{=}0.63$, and the slope steepens with grounding, from $0.46$ on captioning to $1.71$ on ChartQA. Isotonic fits explain at least $96\%$ of the variance of the ten entropy-decile means on every task. The fits are listed in Appendix~\ref{app:extra}.

\paragraph{Image and entropy.}
To test whether the image itself is responsible, we intervene on the image alone. The same prompts are decoded with the correct and with a mismatched image along a shared teacher-forced trajectory. Panel (b) shows that the correct image lengthens accepted blocks on every task, and more so as grounding increases. The correct image also lowers the target's mean entropy along that trajectory, from $0.72$ to $0.26$ nats on DocVQA, and once entropy is held fixed the remaining image effect is no longer positive. Together with the fused-state ablation, this places the image's contribution in the entropy channel.

\paragraph{The grounded tail.}
Panel (c) compares the cap $e^{b_0}$ of Theorem~\ref{thm:tail} with the measured mean of $\acc$ in each task's lowest-entropy decile. Every floor lies above its cap, and the excess grows with grounding, reaching $5.23$ against $3.41$ on DocVQA. On ChartQA, $11.5\%$ of all rounds accept eight or more tokens.

\paragraph{Transfer.}
The law keeps its form under a change of drafter, target, and modality, with a positive slope in every case. Two-model drafts on Qwen2.5-VL-7B and Qwen2-VL-7B fit at $b_1{=}0.67$ and $0.40$, and beyond vision the slope stays positive on speech recognition, chat, code, and a non-Qwen backbone. These targets all feed vision as a token prefix, and Appendix~\ref{app:extra} reports the fits and a cross-attention target.

\section{Conclusion}
\label{sec:conclusion}
We presented GLANCE, a lossless one-pass block drafter for frozen vision-language models. Its block-diffusion head reads vision through the target's fused states and fills a whole block in one pass, which suits the long verbatim runs of grounded generation. A wide tree verified in one target pass turns that block into accepted length, and the output stays exactly the target's greedy decoding. An entropy law explains when this pays and carries over across targets and modalities.

\section*{Limitations}
Our measurements are at batch one, the regime in which decoding is bound by memory bandwidth and speculative decoding is deployed for latency. We do not measure serving at large batch sizes, which changes the cost of verification. The exactness guarantee concerns greedy decoding. Under sampling, the walk draws each token from the verified row of its node and descends while the draw is a child in the tree. Every committed token is then a draw from the target's own distribution, and the output distribution is preserved while byte identity is not defined. Our characterization of draftability is stated for targets that feed vision as a token prefix, which covers the Qwen-VL family studied here. Finally, we study single-image tasks whose outputs span tens to hundreds of tokens, since tasks with single-token answers leave no decode loop to shorten.

\section*{Ethical Considerations}
This work accelerates inference of existing vision-language models without changing their greedy outputs, so it introduces no new model behavior. Risks of the underlying models, such as biased or incorrect descriptions of images, carry over unchanged and are neither amplified nor mitigated. All datasets are public research benchmarks used under their licenses, and no human subjects or personal data are involved. Faster decoding lowers the energy spent on each generated token.

\bibliography{refs}

\begin{thebibliography}{54}
\providecommand{\natexlab}[1]{#1}

\bibitem[{Agrawal et~al.(2024)Agrawal, Jeon, and Lee}]{agrawal2024adaedl}
Sudhanshu Agrawal, Wonseok Jeon, and Mingu Lee. 2024.
\newblock \href {https://arxiv.org/abs/2410.18351} {{AdaEDL}: Early draft
  stopping for speculative decoding of large language models via an
  entropy-based lower bound on token acceptance probability}.
\newblock \emph{Preprint}, arXiv:2410.18351.

\bibitem[{Ankner et~al.(2024)Ankner, Parthasarathy, Nrusimha, Rinard,
  Ragan-Kelley, and Brandon}]{ankner2024hydra}
Zachary Ankner, Rishab Parthasarathy, Aniruddha Nrusimha, Christopher Rinard,
  Jonathan Ragan-Kelley, and William Brandon. 2024.
\newblock Hydra: Sequentially-dependent draft heads for {Medusa} decoding.
\newblock In \emph{Conference on Language Modeling (COLM)}.
\newblock ArXiv:2402.05109.

\bibitem[{{AQ-MedAI}(2026)}]{aqmedai2025eagle3vl}
{AQ-MedAI}. 2026.
\newblock {Qwen3-VL-8B-Instruct-eagle3}.
\newblock \url{https://huggingface.co/AQ-MedAI/Qwen3-VL-8B-Instruct-eagle3}.
\newblock Hugging Face model checkpoint.

\bibitem[{Arriola et~al.(2025)Arriola, Gokaslan, Chiu, Yang, Qi, Han, Sahoo,
  and Kuleshov}]{arriola2025blockdiffusion}
Marianne Arriola, Aaron Gokaslan, Justin~T. Chiu, Zhihan Yang, Zhixuan Qi,
  Jiaqi Han, Subham~Sekhar Sahoo, and Volodymyr Kuleshov. 2025.
\newblock {Block Diffusion: Interpolating Between Autoregressive and Diffusion
  Language Models}.
\newblock In \emph{International Conference on Learning Representations}.

\bibitem[{Bai et~al.(2025{\natexlab{a}})Bai, Cai, Chen, Chen, Chen, Cheng,
  Deng, Ding, Gao, Ge et~al.}]{qwen3vl2025}
Shuai Bai, Yuxuan Cai, Ruizhe Chen, Keqin Chen, Xionghui Chen, Zesen Cheng,
  Lianghao Deng, Wei Ding, Chang Gao, Chunjiang Ge, and 1 others.
  2025{\natexlab{a}}.
\newblock {Qwen3-VL Technical Report}.
\newblock \emph{arXiv preprint arXiv:2511.21631}.

\bibitem[{Bai et~al.(2025{\natexlab{b}})Bai, Chen, Liu, Wang, Ge, Song, Dang,
  Wang, Wang, Tang et~al.}]{bai2025qwen25vl}
Shuai Bai, Keqin Chen, Xuejing Liu, Jialin Wang, Wenbin Ge, Sibo Song, Kai
  Dang, Peng Wang, Shijie Wang, Jun Tang, and 1 others. 2025{\natexlab{b}}.
\newblock {Qwen2.5-VL Technical Report}.
\newblock \emph{arXiv preprint arXiv:2502.13923}.

\bibitem[{Cai et~al.(2024)Cai, Li, Geng, Peng, Lee, Chen, and
  Dao}]{cai2024medusa}
Tianle Cai, Yuhong Li, Zhengyang Geng, Hongwu Peng, Jason~D Lee, Deming Chen,
  and Tri Dao. 2024.
\newblock Medusa: Simple {LLM} inference acceleration framework with multiple
  decoding heads.
\newblock In \emph{International Conference on Machine Learning}.

\bibitem[{Chen et~al.(2023)Chen, Borgeaud, Irving, Lespiau, Sifre, and
  Jumper}]{chen2023accelerating}
Charlie Chen, Sebastian Borgeaud, Geoffrey Irving, Jean-Baptiste Lespiau,
  Laurent Sifre, and John Jumper. 2023.
\newblock Accelerating large language model decoding with speculative sampling.
\newblock \emph{arXiv preprint arXiv:2302.01318}.

\bibitem[{Chen et~al.(2024{\natexlab{a}})Chen, Chen, Zhang, Chen, Wu, Zhang,
  Chen, Li, Wan, and Wang}]{chen2024allava}
Guiming~Hardy Chen, Shunian Chen, Ruifei Zhang, Junying Chen, Xiangbo Wu, Zhiyi
  Zhang, Zhihong Chen, Jianquan Li, Xiang Wan, and Benyou Wang.
  2024{\natexlab{a}}.
\newblock \href {https://arxiv.org/abs/2402.11684} {{ALLaVA}: Harnessing
  {GPT4V}-synthesized data for lite vision-language models}.
\newblock \emph{Preprint}, arXiv:2402.11684.

\bibitem[{Chen et~al.(2026)Chen, Liang, and Liu}]{chen2026dflash}
Jian Chen, Yesheng Liang, and Zhijian Liu. 2026.
\newblock {DFlash: Block Diffusion for Flash Speculative Decoding}.
\newblock In \emph{International Conference on Machine Learning}.

\bibitem[{Chen et~al.(2024{\natexlab{b}})Chen, May, Svirschevski, Huang,
  Ryabinin, Jia, and Chen}]{chen2024sequoia}
Zhuoming Chen, Avner May, Ruslan Svirschevski, Yuhsun Huang, Max Ryabinin,
  Zhihao Jia, and Beidi Chen. 2024{\natexlab{b}}.
\newblock Sequoia: Scalable and robust speculative decoding.
\newblock In \emph{Advances in Neural Information Processing Systems}.
\newblock ArXiv:2402.12374.

\bibitem[{Cheng et~al.(2025)Cheng, Yang, Li, Deng, Guo, and Hu}]{cheng2025deer}
Zicong Cheng, Guo-Wei Yang, Jia Li, Zhijie Deng, Meng-Hao Guo, and Shi-Min Hu.
  2025.
\newblock \href {https://arxiv.org/abs/2512.15176} {{DEER}: Draft with
  diffusion, verify with autoregressive models}.
\newblock \emph{Preprint}, arXiv:2512.15176.

\bibitem[{Christopher et~al.(2025)Christopher, Bartoldson, Ben-Nun, Cardei,
  Kailkhura, and Fioretto}]{christopher2025specdiff}
Jacob~K Christopher, Brian~R Bartoldson, Tal Ben-Nun, Michael Cardei, Bhavya
  Kailkhura, and Ferdinando Fioretto. 2025.
\newblock Speculative diffusion decoding: Accelerating language generation
  through diffusion.
\newblock In \emph{Proceedings of the North American Chapter of the Association
  for Computational Linguistics (NAACL)}.
\newblock ArXiv:2408.05636.

\bibitem[{Fu et~al.(2024)Fu, Bailis, Stoica, and Zhang}]{fu2024lookahead}
Yichao Fu, Peter Bailis, Ion Stoica, and Hao Zhang. 2024.
\newblock Break the sequential dependency of {LLM} inference using lookahead
  decoding.
\newblock In \emph{Proceedings of the International Conference on Machine
  Learning (ICML)}.
\newblock ArXiv:2402.02057.

\bibitem[{Gagrani et~al.(2024)Gagrani, Goel, Jeon, Park, Lee, and
  Lott}]{gagrani2024speculative}
Mukul Gagrani, Raghavv Goel, Wonseok Jeon, Junyoung Park, Mingu Lee, and
  Christopher Lott. 2024.
\newblock On speculative decoding for multimodal large language models.
\newblock \emph{arXiv preprint arXiv:2404.08856}.
\newblock ELVM @ CVPR 2024.

\bibitem[{Ganesan et~al.(2025)Ganesan, Segal, Aggarwal, Sinnadurai, Lie, and
  Thangarasa}]{liu2025massv}
Mugilan Ganesan, Shane Segal, Ankur Aggarwal, Nish Sinnadurai, Sean Lie, and
  Vithursan Thangarasa. 2025.
\newblock {MASSV}: Multimodal adaptation and self-data distillation for
  speculative decoding of vision-language models.
\newblock In \emph{Findings of EMNLP}.

\bibitem[{Hu et~al.(2025)Hu, Xia, Liu, Raman, Liu, Bao, Sather, Thangarasa, and
  Zhang}]{wang2025dream}
Yunhai Hu, Tianhua Xia, Zining Liu, Rahul Raman, Xingyu Liu, Bo~Bao, Eric
  Sather, Vithursan Thangarasa, and Sai~Qian Zhang. 2025.
\newblock {DREAM}: Drafting with refined target features and entropy-adaptive
  cross-attention fusion for multimodal speculative decoding.
\newblock In \emph{Advances in Neural Information Processing Systems}.
\newblock ArXiv:2505.19201.

\bibitem[{Huang et~al.(2025{\natexlab{a}})Huang, Yang, Liu, Yin, Li, Ren, and
  Barsoum}]{wang2025specvlm}
Haiduo Huang, Fuwei Yang, Zhenhua Liu, Xuanwu Yin, Dong Li, Pengju Ren, and
  Emad Barsoum. 2025{\natexlab{a}}.
\newblock \href {https://arxiv.org/abs/2509.11815} {{SpecVLM}: Fast speculative
  decoding in vision-language models}.
\newblock \emph{Preprint}, arXiv:2509.11815.

\bibitem[{Huang et~al.(2025{\natexlab{b}})Huang, Guo, and
  Wang}]{huang2024specdecpp}
Kaixuan Huang, Xudong Guo, and Mengdi Wang. 2025{\natexlab{b}}.
\newblock {SpecDec++}: Boosting speculative decoding via adaptive candidate
  lengths.
\newblock In \emph{Conference on Language Modeling (COLM)}.

\bibitem[{Huo et~al.(2025)Huo, Zhang, Wang, Xu, Chen, Tai, and
  Chen}]{huo2025specllava}
Mingxiao Huo, Jiayi Zhang, Hewei Wang, Jinfeng Xu, Zheyu Chen, Huilin Tai, and
  Yijun Chen. 2025.
\newblock {Spec-LLaVA}: Accelerating vision-language models with dynamic
  tree-based speculative decoding.
\newblock \emph{arXiv preprint arXiv:2509.11961}.

\bibitem[{Ji et~al.(2025)Ji, Zhang, Xia, Chen, Shou, Chen, and
  Li}]{ji2025specvlmvideo}
Yicheng Ji, Jun Zhang, Heming Xia, Jinpeng Chen, Lidan Shou, Gang Chen, and
  Huan Li. 2025.
\newblock {SpecVLM}: Enhancing speculative decoding of video llms via
  verifier-guided token pruning.
\newblock In \emph{Proceedings of the Conference on Empirical Methods in
  Natural Language Processing (EMNLP)}.
\newblock ArXiv:2508.16201.

\bibitem[{Kang et~al.(2025)Kang, Shu, Li, Zhai, and Chen}]{lin2025vispec}
Jialiang Kang, Han Shu, Wenshuo Li, Yingjie Zhai, and Xinghao Chen. 2025.
\newblock {ViSpec}: Accelerating vision-language models with vision-aware
  speculative decoding.
\newblock In \emph{Advances in Neural Information Processing Systems}.

\bibitem[{Kwon et~al.(2026)Kwon, Williams, Li, Kouris, and
  Venieris}]{kwon2026whiflash}
Young~D. Kwon, Miles Williams, Rui Li, Alexandros Kouris, and Stylianos~I.
  Venieris. 2026.
\newblock {WhiFlash}: Accelerating speculative decoding with token-level
  cross-paradigm routing.
\newblock In \emph{Proceedings of the Conference on Empirical Methods in
  Natural Language Processing (EMNLP)}.
\newblock ArXiv:2606.07710.

\bibitem[{Leviathan et~al.(2023)Leviathan, Kalman, and
  Matias}]{leviathan2023fast}
Yaniv Leviathan, Matan Kalman, and Yossi Matias. 2023.
\newblock Fast inference from transformers via speculative decoding.
\newblock In \emph{International Conference on Machine Learning}.

\bibitem[{Li et~al.(2026{\natexlab{a}})Li, Fu, Fang, Zhao, Tang, Yuan, and
  Wang}]{li2025diffuspec}
Guanghao Li, Zhihui Fu, Min Fang, Qibin Zhao, Ming Tang, Chun Yuan, and Jun
  Wang. 2026{\natexlab{a}}.
\newblock {DiffuSpec: Unlocking Diffusion Language Models for Speculative
  Decoding}.
\newblock In \emph{Findings of ACL}.

\bibitem[{Li et~al.(2026{\natexlab{b}})Li, Wang, Zhu, Wang, Yin, Shi, Chen,
  Dong, Chen, Pan et~al.}]{specforge2026}
Shenggui Li, Chao Wang, Yikai Zhu, Yubo Wang, Fan Yin, Shuai Shi, Yefei Chen,
  Xiaomin Dong, Qiaoling Chen, Jin Pan, and 1 others. 2026{\natexlab{b}}.
\newblock {SpecForge}: A flexible and efficient open-source training framework
  for speculative decoding.
\newblock \emph{arXiv preprint arXiv:2603.18567}.

\bibitem[{Li et~al.(2026{\natexlab{c}})Li, Luo, Shang, and
  Shen}]{li2026dartree}
Tianyi Li, Yaxin Luo, Xinyi Shang, and Zhiqiang Shen. 2026{\natexlab{c}}.
\newblock \href {https://arxiv.org/abs/2608.13524} {{DARTree}: Speculative
  diffusion decoding with autoregressive draft trees}.
\newblock \emph{Preprint}, arXiv:2608.13524.

\bibitem[{Li et~al.(2024{\natexlab{a}})Li, Wei, Zhang, and
  Zhang}]{li2024eagle2}
Yuhui Li, Fangyun Wei, Chao Zhang, and Hongyang Zhang. 2024{\natexlab{a}}.
\newblock {EAGLE}-2: Faster inference of language models with dynamic draft
  trees.
\newblock In \emph{Empirical Methods in Natural Language Processing}.

\bibitem[{Li et~al.(2024{\natexlab{b}})Li, Wei, Zhang, and Zhang}]{li2024eagle}
Yuhui Li, Fangyun Wei, Chao Zhang, and Hongyang Zhang. 2024{\natexlab{b}}.
\newblock {EAGLE}: Speculative sampling requires rethinking feature
  uncertainty.
\newblock In \emph{International Conference on Machine Learning}.

\bibitem[{Li et~al.(2025)Li, Wei, Zhang, and Zhang}]{li2025eagle3}
Yuhui Li, Fangyun Wei, Chao Zhang, and Hongyang Zhang. 2025.
\newblock {EAGLE}-3: Scaling up inference acceleration of large language models
  via training-time test.
\newblock In \emph{Advances in Neural Information Processing Systems}.
\newblock ArXiv:2503.01840.

\bibitem[{Lin et~al.(2014)Lin, Maire, Belongie, Hays, Perona, Ramanan,
  Doll{\'a}r, and Zitnick}]{lin2014coco}
Tsung-Yi Lin, Michael Maire, Serge Belongie, James Hays, Pietro Perona, Deva
  Ramanan, Piotr Doll{\'a}r, and C~Lawrence Zitnick. 2014.
\newblock Microsoft {COCO}: Common objects in context.
\newblock In \emph{ECCV}.

\bibitem[{Mahmoud(2026)}]{mahmoud2026acceptance}
Saif Mahmoud. 2026.
\newblock \href {https://arxiv.org/abs/2604.14682} {Acceptance dynamics across
  cognitive domains in speculative decoding}.
\newblock \emph{Preprint}, arXiv:2604.14682.

\bibitem[{Masry et~al.(2022)Masry, Long, Tan, Joty, and
  Hoque}]{masry2022chartqa}
Ahmed Masry, Do~Xuan Long, Jia~Qing Tan, Shafiq Joty, and Enamul Hoque. 2022.
\newblock {ChartQA}: A benchmark for question answering about charts with
  visual and logical reasoning.
\newblock In \emph{Findings of ACL}.

\bibitem[{Mathew et~al.(2022)Mathew, Bagal, Tito, Karatzas, Valveny, and
  Jawahar}]{mathew2022infographicvqa}
Minesh Mathew, Viraj Bagal, Rub{\`e}n Tito, Dimosthenis Karatzas, Ernest
  Valveny, and CV~Jawahar. 2022.
\newblock {InfographicVQA}.
\newblock In \emph{WACV}.

\bibitem[{Mathew et~al.(2021)Mathew, Karatzas, and Jawahar}]{mathew2021docvqa}
Minesh Mathew, Dimosthenis Karatzas, and CV~Jawahar. 2021.
\newblock {DocVQA}: A dataset for {VQA} on document images.
\newblock In \emph{WACV}.

\bibitem[{Miao et~al.(2024)Miao, Oliaro, Zhang, Cheng, Wang, Zhang, Wong, Zhu,
  Yang, Shi, Shi, Chen, Arfeen, Abhyankar, and Jia}]{miao2024specinfer}
Xupeng Miao, Gabriele Oliaro, Zhihao Zhang, Xinhao Cheng, Zeyu Wang, Zhengxin
  Zhang, Rae Ying~Yee Wong, Alan Zhu, Lijie Yang, Xiaoxiang Shi, Chunan Shi,
  Zhuoming Chen, Daiyaan Arfeen, Reyna Abhyankar, and Zhihao Jia. 2024.
\newblock {SpecInfer}: Accelerating large language model serving with
  tree-based speculative inference and verification.
\newblock In \emph{ASPLOS}.

\bibitem[{Monea et~al.(2023)Monea, Joulin, and Grave}]{monea2023pass}
Giovanni Monea, Armand Joulin, and Edouard Grave. 2023.
\newblock {PaSS}: Parallel speculative sampling.
\newblock \emph{arXiv preprint arXiv:2311.13581}.
\newblock NeurIPS 2023 Workshop on Efficient Natural Language and Speech
  Processing.

\bibitem[{Nie et~al.(2025)Nie, Zhu, You, Zhang, Ou, Hu, Zhou, Lin, Wen, and
  Li}]{nie2025llada}
Shen Nie, Fengqi Zhu, Zebin You, Xiaolu Zhang, Jingyang Ou, Jun Hu, Jun Zhou,
  Yankai Lin, Ji-Rong Wen, and Chongxuan Li. 2025.
\newblock {Large Language Diffusion Models}.
\newblock In \emph{Advances in Neural Information Processing Systems}.

\bibitem[{Ringel and Romano(2026)}]{ringel2026ddtree}
Liran Ringel and Yaniv Romano. 2026.
\newblock {Accelerating Speculative Decoding with Block Diffusion Draft Trees}.
\newblock In \emph{Conference on Language Modeling (COLM)}.

\bibitem[{Shen et~al.(2026)Shen, Wang, Zhang, Hsieh, Han, Wan, Zhang, Zhang,
  Xiong, Liu, Zhang, Cao, Zhao, and Zhang}]{mmspec2026}
Hui Shen, Xin Wang, Ping Zhang, Yunta Hsieh, Qi~Han, Zhongwei Wan, Ziheng
  Zhang, Jingxuan Zhang, Jing Xiong, Ziyuan Liu, Yifan Zhang, Hangrui Cao,
  Chenyang Zhao, and Mi~Zhang. 2026.
\newblock \href {https://arxiv.org/abs/2603.14989} {{MMSpec}: Benchmarking
  speculative decoding for vision-language models}.
\newblock \emph{Preprint}, arXiv:2603.14989.

\bibitem[{Singh et~al.(2019)Singh, Natarajan, Shah, Jiang, Chen, Batra, Parikh,
  and Rohrbach}]{singh2019textvqa}
Amanpreet Singh, Vivek Natarajan, Meet Shah, Yu~Jiang, Xinlei Chen, Dhruv
  Batra, Devi Parikh, and Marcus Rohrbach. 2019.
\newblock Towards {VQA} models that can read.
\newblock In \emph{CVPR}.

\bibitem[{Tong et~al.(2026)Tong, Zhang, Wan, Lin, Yuan, and Hu}]{sage2026}
Yujia Tong, Tian Zhang, Yunyang Wan, Kaiwei Lin, Jingling Yuan, and Chuang Hu.
  2026.
\newblock \href {https://arxiv.org/abs/2602.00523} {{SAGE}: Accelerating
  vision-language models via entropy-guided adaptive speculative decoding}.
\newblock \emph{Preprint}, arXiv:2602.00523.

\bibitem[{Wang et~al.(2025{\natexlab{a}})Wang, Su, Li, Xia, Ye, Duan, Wang, and
  Zhang}]{wang2024opttree}
Jikai Wang, Yi~Su, Juntao Li, Qingrong Xia, Zi~Ye, Xinyu Duan, Zhefeng Wang,
  and Min Zhang. 2025{\natexlab{a}}.
\newblock {OPT-Tree}: Speculative decoding with adaptive draft tree structure.
\newblock \emph{Transactions of the Association for Computational Linguistics
  (TACL)}.
\newblock ArXiv:2406.17276.

\bibitem[{Wang et~al.(2026)Wang, Wertheimer, Lim, Srivatsa, Ganti, Zhang, and
  Wang}]{wang2026xpress}
Zheng Wang, Davis Wertheimer, Yu~Chin~Fabian Lim, Mudhakar Srivatsa, Raghu~K.
  Ganti, Minjia Zhang, and Naigang Wang. 2026.
\newblock \href {https://arxiv.org/abs/2608.02438} {{xPress}: Parallel
  refinement for diffusion drafters in speculative decoding}.
\newblock \emph{Preprint}, arXiv:2608.02438.

\bibitem[{Wang et~al.(2025{\natexlab{b}})Wang, Li, Du, Zhou, Zhang, and
  Yang}]{specflash2025}
Zihua Wang, Ruibo Li, Haozhe Du, Joey~Tianyi Zhou, Yu~Zhang, and Xu~Yang.
  2025{\natexlab{b}}.
\newblock \href {https://arxiv.org/abs/2505.12728} {{SpecFLASH}: A
  latent-guided semi-autoregressive speculative decoding framework for
  efficient multimodal generation}.
\newblock \emph{Preprint}, arXiv:2505.12728.

\bibitem[{Wu et~al.(2026{\natexlab{a}})Wu, Lan, Fu, Gao, Wang, Yu, Alvarez,
  Molchanov, Luo, Han, Zhu, and Xie}]{wu2026fastdvlm}
Chengyue Wu, Shiyi Lan, Yonggan Fu, Sensen Gao, Jin Wang, Jincheng Yu, Jose~M.
  Alvarez, Pavlo Molchanov, Ping Luo, Song Han, Ligeng Zhu, and Enze Xie.
  2026{\natexlab{a}}.
\newblock \href {https://arxiv.org/abs/2604.06832} {{Fast-dVLM}: Efficient
  block-diffusion vlm via direct conversion from autoregressive vlm}.
\newblock \emph{arXiv preprint arXiv:2604.06832}.

\bibitem[{Wu et~al.(2026{\natexlab{b}})Wu, Yao, Qi, Zheng, Wang, Ma, Liao,
  Lakkaraju, Li, and Du}]{wu2026dpace}
Tianyu Wu, Yu~Yao, Zhenting Qi, Han Zheng, Zhuohan Wang, Haoran Ma, Lawrence
  Liao, Himabindu Lakkaraju, Ju~Li, and Yilun Du. 2026{\natexlab{b}}.
\newblock \href {https://arxiv.org/abs/2605.18810} {{D-PACE}: Dynamic
  position-aware cross-entropy for parallel speculative drafting}.
\newblock \emph{Preprint}, arXiv:2605.18810.

\bibitem[{Xia et~al.(2024)Xia, Yang, Dong, Wang, Li, Ge, Liu, Li, and
  Sui}]{xia2024specbench}
Heming Xia, Zhe Yang, Qingxiu Dong, Peiyi Wang, Yongqi Li, Tao Ge, Tianyu Liu,
  Wenjie Li, and Zhifang Sui. 2024.
\newblock Unlocking efficiency in large language model inference: A
  comprehensive survey of speculative decoding.
\newblock In \emph{Findings of the Association for Computational Linguistics
  (ACL Findings)}.
\newblock Spec-Bench.

\bibitem[{Xie et~al.(2026)Xie, Wang, Qiu, and Cheng}]{chen2025hivis}
Zhinan Xie, Peisong Wang, Shuang Qiu, and Jian Cheng. 2026.
\newblock {HiViS}: Hiding visual tokens from the drafter for speculative
  decoding in vision-language models.
\newblock In \emph{CVPR Findings}.

\bibitem[{Yin et~al.(2024)Yin, Chen, Huang, and Wang}]{yin2024theoretical}
Ming Yin, Minshuo Chen, Kaixuan Huang, and Mengdi Wang. 2024.
\newblock A theoretical perspective for speculative decoding algorithm.
\newblock In \emph{Advances in Neural Information Processing Systems
  (NeurIPS)}.
\newblock ArXiv:2411.00841.

\bibitem[{Zhang et~al.(2026{\natexlab{a}})Zhang, Yu, Liu, Yu, Li, Zhu, Duo,
  Xiong, Song, Yu et~al.}]{dflare2026}
Jiebin Zhang, Zhenghan Yu, Song Liu, Eugene~J. Yu, Zheng Li, Dawei Zhu,
  Jiangshan Duo, Weimin Xiong, Yifan Song, Guanghua Yu, and 1 others.
  2026{\natexlab{a}}.
\newblock \href {https://arxiv.org/abs/2606.02091} {{DFlare}: Scaling up draft
  capacity for block diffusion speculative decoding}.
\newblock \emph{Preprint}, arXiv:2606.02091.

\bibitem[{Zhang et~al.(2026{\natexlab{b}})Zhang, Wang, Gao, Wu, Cao, Han,
  Ivanovic, Liu, Pavone, Han, Zhou, and Xie}]{zhang2026fastddrive}
Kewei Zhang, Jin Wang, Sensen Gao, Chengyue Wu, Yulong Cao, Songyang Han, Boris
  Ivanovic, Langechuan Liu, Marco Pavone, Song Han, Daquan Zhou, and Enze Xie.
  2026{\natexlab{b}}.
\newblock \href {https://arxiv.org/abs/2605.23163} {{Fast-dDrive}: Efficient
  block-diffusion {VLM} for autonomous driving}.
\newblock \emph{Preprint}, arXiv:2605.23163.

\bibitem[{Zhang et~al.(2026{\natexlab{c}})Zhang, Qiu, He, and
  Dai}]{zhang2026caddtree}
Shuai Zhang, Huachuan Qiu, Hongliang He, and Yong Dai. 2026{\natexlab{c}}.
\newblock \href {https://arxiv.org/abs/2606.01813} {{Cost-Aware Diffusion Draft
  Trees for Speculative Decoding}}.
\newblock \emph{Preprint}, arXiv:2606.01813.

\bibitem[{Zhang et~al.(2026{\natexlab{d}})Zhang, Zhang, Cui, and
  Miao}]{zhang2026dflow}
Yaojie Zhang, Linfeng Zhang, Bin Cui, and Xupeng Miao. 2026{\natexlab{d}}.
\newblock \href {https://arxiv.org/abs/2609.06498} {{DFlow}: Enabling verifier
  information flow in block diffusion speculative decoding}.
\newblock \emph{Preprint}, arXiv:2609.06498.

\end{thebibliography}

\appendix
\raggedbottom

\section{Theoretical Statements and Proofs}
\label{app:theory}

\subsection{Losslessness}
\label{app:theory-lossless}

\begin{thmlosslessfull}
For the decoder of Definition~\ref{def:tree}, assume (float-exactness) that at every visited prefix $x\circ b\circ y_{1:k}$ the target $\argmax$ is unique and the top logit gap $\gamma_k:=\ell_{(1)}-\ell_{(2)}>0$ exceeds the packed-versus-unpacked logit perturbation. Then for any candidate tree the committed token sequence equals the target's autoregressive greedy sequence exactly, and losslessness can fail only at a margin $\gamma_k$ below that perturbation, that is, at a near-tie.
\end{thmlosslessfull}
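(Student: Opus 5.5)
The argument is an induction over decoding rounds, and within each round an induction over depth along the greedy path; the statement itself is essentially the invariant ``committed prefix equals the greedy prefix.'' Let $G=(G_1,G_2,\dots)$ denote the target's autoregressive greedy sequence after the prompt, computed unpacked. The invariant we maintain is that at the start of every round the committed prefix is $G_{1:t}$ and the pending root $b$ equals $G_{t+1}$. The base case is the first round: the root is read off the prefill row, which is an ordinary unpacked evaluation, so $b=G_1$.

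Within one round, consider the packed ancestor-masked target pass over $\Tree$. By Definition~\ref{def:tree}, the row of a depth-$d$ node $y_{1:d}$ is, in exact arithmetic, $p(\cdot\mid x\circ b\circ y_{1:d})$. The ancestor mask guarantees that each node attends only to $x$, $b$ and its own ancestors, so siblings and other branches cannot leak into it. In floating point the row's logits differ from the unpacked logits by at most the perturbation $\epsilon$. The core lemma is then this: if the unpacked top-$2$ gap at the prefix exceeds $2\epsilon$ (or exceeds $\epsilon$, in whatever norm the statement's ``perturbation'' is measured; I would fix $\epsilon$ as a bound on $\|\ell^{\mathrm{packed}}-\ell^{\mathrm{unpacked}}\|_\infty$ and require $\gamma_k>2\epsilon$), then the packed argmax equals the unpacked argmax. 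This is a one-line argument: the winner's logit drops by at most $\epsilon$ and every competitor's rises by at most $\epsilon$.

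Next comes the walk itself. Starting at the root row, the walk reads the argmax $\hat Y_1$ and descends to child $\hat Y_1$ if that child is in $\Tree$. By the lemma, $\hat Y_1=Y_1=G_{t+2}$. Inductively, if the walk sits at the node $Y_{1:k}\in\Tree$, that node's row is evaluated at the visited prefix $x\circ b\circ Y_{1:k}$, which is a greedy prefix, so the float-exactness hypothesis applies and its argmax is $Y_{k+1}$. The walk therefore descends exactly along $Y$ while $Y_{1:j}\in\Tree$ and stops at depth $\Acc(\Tree)$. Prefix-closure of $\Tree$ ensures the walk never needs a node whose parent is missing. The committed tokens are $Y_{1:\Acc}$, and the new root is the argmax of the last verified row, namely $Y_{\Acc+1}$. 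Since $x\circ b\circ Y$ is by definition the continuation of the greedy sequence, the invariant advances by $\Acc+1\ge1$ tokens. The induction therefore terminates, at the length cap or at EOS, which is itself a greedy token. None of this depends on the drafter or on how $\Tree$ was chosen; the tree only affects $\Acc$. One further point needs checking: the KV cache committed for accepted nodes must be the same as an unpacked cache up to perturbation. I would fold this into $\epsilon$ by defining the perturbation relative to the actual cached state that the implementation uses.

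I expect the main obstacle to be making ``packed-versus-unpacked perturbation'' precise enough that the induction closes. Errors in the cache could compound across rounds, so that the perturbation at later prefixes is not a fixed constant. I would handle this by defining $\epsilon_k$ per visited prefix as the realized logit discrepancy between the engine's evaluation and the exact unpacked evaluation at that prefix, and stating the hypothesis as $\gamma_k>2\epsilon_k$. With that definition the lemma applies pointwise and no compounding analysis is needed.

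For the converse clause, note that the only step that used the hypothesis is the argmax-preservation lemma. So if the committed sequence ever departs from $G$, the first departure occurs at some visited prefix where the packed and unpacked argmaxes differ. At that prefix the lemma forces $\gamma_k\le 2\epsilon_k$, which is a near-tie.
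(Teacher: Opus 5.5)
Your proposal is correct and follows the paper's proof. The shared structure is a per-round argument that every produced token, accepted or emitted as the new root, is the target's greedy token at its own prefix, so the tree only changes how many tokens a round commits; then an induction over rounds in which each correction becomes the next root; and the observation that the only fragile step is agreement between the packed and unpacked argmax. Your explicit margin lemma, requiring $\gamma_k>2\epsilon_k$ for a sup-norm logit discrepancy $\epsilon_k$ measured at each visited prefix so that cache effects need no compounding analysis, makes precise the paper's informal ``gap exceeds the perturbation'' rather than taking a different route.
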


\begin{proof}
\emph{Single round.} By Definition~\ref{def:tree} the verifier row at any prefix $x\circ b\circ y_{1:k}$ reproduces $p(\cdot\mid x\circ b\circ y_{1:k})$ exactly, since ancestor-only masking and token packing change the attention layout, not the conditioning of a row. Let $Y_{k+1}$ be the target greedy token there. If $y_{1:k}\circ Y_{k+1}\in\Tree$, the walk descends and commits $Y_{k+1}$, and otherwise it stops and emits $Y_{k+1}$ as the next root. Either way the token produced at that position is $Y_{k+1}$, and the tree affects only how many tokens the round commits.

\emph{Induction over rounds.} The emitted stream concatenates accepted paths and corrections, and each round's correction is the next round's root. Every produced token is the target greedy token at its own prefix, so induction on the output position gives the committed sequence $Y_1Y_2\cdots$ bitwise, independent of all budget choices.

\emph{Tie sensitivity.} The only step that can break is the equality of the packed and the autoregressive $\argmax$. It fails only when $\gamma_k$ falls below the packed-versus-unpacked logit perturbation, that is, at a near-tie. In fp32 the perturbation is below every observed margin, and the fp32 audit shows zero divergences.
\end{proof}

\subsection{Derivation of the Law}
\label{app:theory-law}

\begin{assumption}[Offset survival with marginal hazards]
\label{asm:survival}
Conditioned on the round, or equivalently on $\HH$, let the drafter's top-$1$ token match the target greedy token at offset $j$ with probability $\pj$, and let $\acc$ be the leading run of matches before the first miss, capped at $\Lmax$. We assume the survival factorizes into the marginal hazards,
\begin{equation}
\label{eq:survival}
\begin{aligned}
\Prob[\acc\ge\ell\mid\HH]&=\prod_{j\le\ell}\pj,\\
\Eacc&=\sum_{\ell=1}^{\Lmax}\prod_{j\le\ell}\pj .
\end{aligned}
\end{equation}
This is an assumption, not a consequence of the chain rule, since it replaces the conditional hazards $\Prob[\mathrm{match}_j\mid\mathrm{match}_{<j},\HH]$ by the marginals $\pj$.
\end{assumption}

\begin{lemma}[Slowly varying hazard gives a truncated-geometric mean]
\label{lem:trunc}
Under Assumption~\ref{asm:survival} and the further approximation $\pj\approx\pmatch$ for all $j\le\Lmax$, Equation~\eqref{eq:survival} reduces to $\Eacc\approx\sum_{\ell=1}^{\Lmax}\pmatch^{\ell} =\pmatch(1-\pmatch^{\Lmax})/(1-\pmatch)$, which tends to $\pmatch/(1-\pmatch)$ as $\Lmax\to\infty$. The untruncated form is an upper bound for every finite $\Lmax$, with relative truncation error $\pmatch^{\Lmax}$.
\end{lemma}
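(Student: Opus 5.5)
The plan is to substitute the constant-hazard approximation into Equation~\eqref{eq:survival} and then handle the finite geometric sum by elementary algebra. Under Assumption~\ref{asm:survival}, $\Eacc=\sum_{\ell=1}^{\Lmax}\prod_{j\le\ell}\pj$. Replacing every $\pj$ by the common value $\pmatch$ turns each product into $\pmatch^{\ell}$, which gives $\Eacc\approx\sum_{\ell=1}^{\Lmax}\pmatch^{\ell}$. For $\pmatch\in[0,1)$ the standard finite geometric identity yields
\[
\sum_{\ell=1}^{\Lmax}\pmatch^{\ell}=\frac{\pmatch(1-\pmatch^{\Lmax})}{1-\pmatch}.
\]
This is verified by multiplying the sum by $1-\pmatch$ and telescoping. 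The degenerate case $\pmatch=1$ is read as the limit value $\Lmax$.

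For the limit and the upper bound, I would write $\pmatch(1-\pmatch^{\Lmax})/(1-\pmatch)=\frac{\pmatch}{1-\pmatch}-\frac{\pmatch^{\Lmax+1}}{1-\pmatch}$. Since $\pmatch^{\Lmax}\to0$ when $\pmatch<1$, the sum tends to $\pmatch/(1-\pmatch)$. The subtracted term is nonnegative, so the untruncated form is an upper bound for every finite $\Lmax$.

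For the relative error, I would take the difference $\pmatch^{\Lmax+1}/(1-\pmatch)$ and divide it by the untruncated value $\pmatch/(1-\pmatch)$, which gives exactly $\pmatch^{\Lmax}$. This makes precise the claim in Section~\ref{sec:law} that the gap is below $4\%$ for $\pmatch\le0.8$, since $0.8^{15}\approx0.035$.

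The only delicate point is the meaning of ``$\approx$''. The lemma is an approximation statement, so I would make it quantitative. If $|\pj-\pmatch|\le\varepsilon$ for all $j\le\Lmax$, then the telescoping bound $|\prod_j a_j-\prod_j c_j|\le\sum_j|a_j-c_j|$, valid for factors in $[0,1]$, gives $|\prod_{j\le\ell}\pj-\pmatch^{\ell}|\le\ell\varepsilon$. Summing over $\ell$ then bounds the error of the approximation by $\varepsilon\Lmax(\Lmax+1)/2$. This error statement is the main obstacle, because the identity itself is routine once the hazards are constant.
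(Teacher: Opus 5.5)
Your proposal is correct and matches the paper's proof. The paper also sets $\pj\equiv\pmatch$, sums the finite geometric series, and gets the relative error $\pmatch^{\Lmax}$ by dividing the tail $\pmatch^{\Lmax}\cdot\pmatch/(1-\pmatch)$ by the untruncated mean. Your added bound $\varepsilon\Lmax(\Lmax+1)/2$ on the constant-hazard step is valid and supplements the paper, which treats that step only qualitatively by noting that it overstates $\Eacc$ when $\pj$ decays in $j$.
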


\begin{proof}
Setting $\pj\equiv\pmatch$ in Equation~\eqref{eq:survival} gives the finite geometric series. Its truncated tail is
\[
\sum_{\ell>\Lmax}\pmatch^{\ell}=\pmatch^{\Lmax}\cdot\frac{\pmatch}{1-\pmatch},
\]
so the relative truncation error is $\pmatch^{\Lmax}$. At $\Lmax{=}15$,
the error $\pmatch^{\Lmax}$ is $0.002$, $0.035$, and $0.206$ at $\pmatch=0.65$, $0.80$, and $0.90$, so the closed form is tight except near the entropy floor.

Two errors are folded in, the truncation tail and the slowly varying step itself. The step overstates $\Eacc$ when $\pj$ decays in $j$. It understates the mean at extremely low entropy when one $\pmatch$ is fitted across a range that contains a sharp spike at low $\HH$. The supremum of the DocVQA fit is $3.34$ at $b_0{=}1.226$ and $\Lmax{=}15$, below the measured DocVQA bottom-decile mean of $5.23$, which Theorem~\ref{thm:tail} explains.

Substituting $\pmatch(\HH)=\sigm(b_0-b_1\HH)$ gives Equation~\eqref{eq:law}, which we use as an operational characterization of the conditional mean, not as an exact survival identity.
\end{proof}

\begin{proposition}[Entropy controls the match, and the affine logit is a surrogate]
\label{prop:link}
Let $\pmatch$ be the probability that the drafter's top-$1$ token at the root equals the target greedy token, and $\HH$ the target's root entropy. (i)~The target's own top-$1$ error $e^\star=1-\max_w p(w\mid x,b)$ obeys Fano's inequality
\[
\HH\le\Hop_b(e^\star)+e^\star\log(|\Sigma|-1),
\]
so larger $\HH$ forces larger $e^\star$, and $\HH\to0$ forces $\max_w p\to1$. (ii)~Modeling the drafter's logit error as logistic with scale $s$ gives $\pmatch=\sigm(\Delta/s)$ for the target's top-$2$ margin $\Delta(\HH)$, which decreases in $\HH$ by~(i). (iii)~A first-order expansion of $\Delta$ in $\HH$ gives $\logit\pmatch\approx b_0-b_1\HH$ with $b_1=-\Delta'(\HH_0)/s>0$. Part~(i) predicts sign and monotonicity, and parts~(ii) and~(iii) are modeling steps that select the functional form.
\end{proposition}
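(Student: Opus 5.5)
Part~(i) is the standard Fano inequality applied to a degenerate estimation problem. Take the target's next token $W\sim p(\cdot\mid x,b)$ on the alphabet $\Sigma$, and take the constant estimator $\hat W=\argmax_w p(w\mid x,b)$. Its error probability is $\Prob[W\ne\hat W]=e^\star$. Since $\hat W$ is deterministic, $H(W\mid\hat W)=\HH$, and Fano gives $\HH\le\Hop_b(e^\star)+e^\star\log(|\Sigma|-1)$ directly.

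The same bound can be checked by hand through the grouping decomposition
\[
\HH=\Hop_b(e^\star)+e^\star\,H\bigl(W\mid W\ne\hat W\bigr),
\]
together with $H(W\mid W\ne\hat W)\le\log(|\Sigma|-1)$. The right-hand side $g(e)=\Hop_b(e)+e\log(|\Sigma|-1)$ is continuous and strictly increasing on $[0,1-1/|\Sigma|]$, because $g'(e)=\log\frac{(1-e)(|\Sigma|-1)}{e}\ge0$ on that interval. It also vanishes at $e=0$. Since $e^\star\le1-1/|\Sigma|$ always holds, inverting $g$ gives $e^\star\ge g^{-1}(\HH)$, which is increasing in $\HH$. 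In the other direction, $\HH\to0$ forces $\max_w p\to1$ by the elementary bound $\HH\ge-\log\max_w p$ (min-entropy is at most Shannon entropy).

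I would state the monotone consequence carefully: (i) gives a lower bound on $e^\star$ that is monotone in $\HH$, not a pointwise monotone function of $\HH$. That is the sense in which the proposition claims sign and monotonicity.

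Part~(ii) is a modeling step rather than a derivation. I would write the drafter's comparison between the target's top two candidates as the target margin $\Delta$ plus noise $\varepsilon$, where $\varepsilon$ is logistic with scale $s$ and symmetric. The drafter's top-1 token matches when $\Delta+\varepsilon>0$, and this has probability $\sigm(\Delta/s)$. The approximation ignores candidates beyond the top two, which is accurate when the target is concentrated.

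To connect $\Delta$ to $\HH$, I would note that the probability margin $p_{(1)}-p_{(2)}\ge 1-2e^\star$ is controlled from below by (i). Low entropy therefore forces a large margin, and high entropy permits only small ones. The main obstacle is exactly this point. Entropy does not determine $\Delta$, so "$\Delta(\HH)$ decreasing" has to be read as a monotone conditional mean or envelope, $\Delta(\HH):=\E[\Delta\mid\HH]$. I would state this explicitly as an assumption rather than claim it as a theorem.

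Part~(iii) is a Taylor expansion of $\Delta(\HH)$ around a reference entropy $\HH_0$:
\[
\Delta/s\approx\Delta(\HH_0)/s+\Delta'(\HH_0)(\HH-\HH_0)/s.
\]
Matching this to $b_0-b_1\HH$ gives $b_1=-\Delta'(\HH_0)/s$, which is positive by the monotonicity from (ii). It also identifies $b_0=(\Delta(\HH_0)-\Delta'(\HH_0)\HH_0)/s$. Substituting the result into Lemma~\ref{lem:trunc} recovers Equation~\eqref{eq:law}.
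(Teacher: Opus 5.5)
Your proposal is correct and follows the paper's route: Fano's inequality with the target's own top-$1$ token as a constant estimator, then a logistic perturbation of the top-$2$ margin giving $\pmatch=\sigm(\Delta/s)$, then a first-order expansion yielding $b_1=-\Delta'(\HH_0)/s$. Where you depart from the paper, you are more careful than its proof in three places: the min-entropy bound $\HH\ge-\log\max_w p$ is what actually gives $\HH\to0\Rightarrow\max_w p\to1$ (Fano only gives a lower bound on $e^\star$); putting the logistic noise directly on the margin avoids the paper's ``difference of logistics is logistic'' step (that fact holds for Gumbel noise, not logistic noise); and treating the decrease of $\Delta$ in $\HH$ as an explicit assumption, rather than a consequence of (i), matches what the argument can support.
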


\begin{proof}
\begin{enumerate}
\item[(i)] View the target's top-$1$ token as an estimator of a draw $W\sim p$. Fano's inequality bounds $\HH$ by an increasing function of the error $e^\star$ on $[0,1-1/|\Sigma|]$, so $e^\star$ grows with $\HH$ and vanishes as $\HH\to0$.
\item[(ii)] The drafter preserves the target's ordering if and only if the perturbed margin stays positive. A difference of logistics is logistic, so
\[
\logit\pmatch=\Delta/s,
\]
and on a shell of fixed entropy $\Delta$ decreases in $\HH$ by~(i).
\item[(iii)] This is the first-order expansion. Its adequacy over the empirical window is what the curve $R^2$ measures.
\end{enumerate}
A fuller derivation under a temperature family, in which the target's logits are a fixed shape scaled by an inverse temperature, shows that $\logit p^{\mathrm{tgt}}_{(1)}(\HH)$ is affine to $R^2=0.84$ to $0.90$ over the resolvable window $\HH\in[0.05,1.2]$ for Zipf, geometric-gap, and near-binary logit shapes, with self-confidence slopes $\kappa\approx3.2$ to $3.9$.

This reduces the match slope to a single drafter-noise scale $s$ through $b_1=\kappa/s$. Matching the measured offset-$1$ slopes gives $s=3.4$, $7.7$, and $3.8$ on captioning, TextVQA, and DocVQA.
The affine form is exactly a local linearization, and since the binary closed form is provably curved globally, the law is stated over the empirical window only.
\end{proof}

\begin{corgroundedfull}
Under the untruncated form of Equation~\eqref{eq:law},
\[
\begin{aligned}
\partial_\HH\pmatch&=-b_1\pmatch(1-\pmatch),\\
\partial_\HH\Eacc&=\frac{\partial_\HH\pmatch}{(1-\pmatch)^2}\\
&=-b_1\Eacc<0,
\end{aligned}
\]
so $\Eacc$ is strictly decreasing in $\HH$, and the truncated form is strictly increasing in $\pmatch$ and hence decreasing in $\HH$ as well. Any regime with systematically lower next-token entropy therefore has a longer expected accepted block. Grounded and OCR generation copies determinate glyph and answer strings from the image and thereby collapses the target's entropy. Grounded spans are therefore more draftable than open text, and the mean accepted length of a task is ordered by grounding.
\end{corgroundedfull}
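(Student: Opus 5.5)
The plan is to differentiate the law in Equation~\eqref{eq:law} by the chain rule, in two layers. The inner layer is the logistic link from Proposition~\ref{prop:link}. Since $\logit\pmatch=b_0-b_1\HH$, the identity $\sigm'=\sigm(1-\sigm)$ gives
\[
\partial_\HH\pmatch=-b_1\pmatch(1-\pmatch).
\]
This is strictly negative, because $b_1>0$ and $\pmatch\in(0,1)$ for every finite logit.

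The outer layer comes from Lemma~\ref{lem:trunc}. The untruncated mean there is $g(\pmatch)=\pmatch/(1-\pmatch)$, with derivative
\[
g'(\pmatch)=\frac{1}{(1-\pmatch)^2}.
\]
Multiplying the two factors gives
\[
\partial_\HH\Eacc=\frac{-b_1\pmatch}{1-\pmatch}=-b_1\Eacc,
\]
which is negative because $\Eacc>0$. Equivalently, $\Eacc=e^{b_0-b_1\HH}$ under the untruncated law, so the exponential form can serve as a one-line cross-check.

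For the truncated form I would not differentiate the closed fraction. Instead I would use the finite sum $\sum_{\ell=1}^{\Lmax}\pmatch^{\ell}$. Each term is strictly increasing in $\pmatch$ on $(0,1)$, so the sum is too. Composing with the strictly decreasing map $\HH\mapsto\pmatch(\HH)$ gives strict decrease in $\HH$.

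The qualitative conclusion then follows by monotonicity. Take two regimes with conditional entropies ordered pointwise or in distribution, with the lower-entropy one stochastically dominated by the other. Averaging the decreasing function $\Eacc(\HH)$ over each regime orders their means: first-order stochastic dominance reverses the order of expectations of decreasing functions.

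The main obstacle is the last sentence of the corollary, that grounding orders task means. This is not a theorem about the law. It rests on the empirical premise that grounded tasks have stochastically lower root entropy, which Table~\ref{tab:sglang} supports only through mean entropies $\bar\HH$.

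It also requires shared coefficients $b_0,b_1$ across tasks. The fits in Section~\ref{sec:exp-law} show the slope varying by task. I would therefore state the ordering conditionally: for a common law, or for task-specific laws whose intercepts do not favor the higher-entropy task. I would flag that means alone suffice only via Jensen-type arguments. The untruncated $\Eacc=e^{b_0-b_1\HH}$ is convex in $\HH$, so a mean comparison alone does not order the expectations, and I would require dominance instead.
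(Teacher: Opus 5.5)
Your calculus is the paper's. It runs the chain rule through $\sigm'=\sigm(1-\sigm)$ and $p\mapsto p/(1-p)$, with strictness from $b_1>0$ and $\pmatch\in(0,1)$. Your termwise argument for the truncated sum and the cross-check $\Eacc=e^{b_0-b_1\HH}$ are tidy additions that the paper leaves implicit.

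The routes part at the last two sentences. The paper never deduces the task ordering from the law. Its proof says the ordering rests only on the measured monotone decrease of $\acc$ in $\HH$, not on the fitted form. It then reads the ordering straight off the round logs: mean $\acc$ of $1.88$, $2.12$, $2.67$, $3.08$, $3.71$ from captioning to ChartQA, with a bootstrap interval $[1.07,1.33]$ and a Mann--Whitney test for the DocVQA--captioning gap.

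You instead formalize ``systematically lower entropy'' as first-order stochastic dominance under a common law, and you state the ordering conditionally. Your version makes explicit the hypotheses the paper glosses over. Your caution that mean entropies plus one shared law cannot carry the argument is borne out by the paper's own numbers. DocVQA has lower $\bar{\HH}$ than ChartQA ($0.15$ against $0.19$) yet shorter accepted blocks, so the observed ordering is by grounding, not by mean entropy. The paper's route delivers the sentence as written, but its ``therefore'' is an observation consistent with the law, not a deduction.

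One fix to your fallback condition: with task-specific slopes, an intercept comparison does not suffice. The grounded fits have both the larger $b_0$ and the steeper $b_1$, so the fitted curves cross. Both forms of the law are the same increasing function of the logit, so the curves cross where the logits do, which for DocVQA against captioning is near $\HH\approx0.82$ nats. What you need is that the lower-entropy task's curve dominate the other's on the relevant entropy support. Stochastic dominance then finishes the comparison.
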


\begin{proof}
The derivative computation is displayed in the statement and uses only $b_1>0$ and $\pmatch\in(0,1)$. The ordering claim uses only the measured monotone decrease of $\acc$ in $\HH$, not the fitted closed form. The measured round logs order the mean accepted length by grounding, at $1.88$ on captioning, $2.12$ on TextVQA, and $3.08$ on DocVQA, and the InfoVQA and ChartQA logs extend the ordering to $2.67$ and $3.71$, the latter under ChartQA's templated prompts. The gap between DocVQA and captioning, $+1.20$, has a $95\%$ bootstrap interval of $[1.07,1.33]$ and a Mann-Whitney $p=8.7\times10^{-68}$, and the grounded task carries both the higher $b_0$ and the steeper $b_1$, as Table~\ref{tab:lawfit} lists.
\end{proof}

\subsection{The Noise Ceiling}
\label{app:theory-ceiling}

\begin{proposition}[Entropy-only noise ceiling]
\label{prop:ceiling}
Among all $\sigma(\HH)$-measurable predictors $g(\HH)$ of $\acc$, the largest attainable coefficient of determination is $\bar R^2=\Var(\E[\acc\mid\HH])/\Var(\acc)=1-\E[\Var(\acc\mid\HH)]/\Var(\acc)$, attained by the conditional mean. For a truncated-geometric $\acc$ the within-round variance is of order $\E[\acc\mid\HH]^2$, so $\bar R^2$ is intrinsically small regardless of the predictor's quality.
\end{proposition}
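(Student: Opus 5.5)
The plan is to treat the statement as the standard variance decomposition for regression on a $\sigma$-algebra, followed by a moment computation for the truncated-geometric law of Lemma~\ref{lem:trunc}.

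\emph{Optimality of the conditional mean.} For any $\sigma(\HH)$-measurable $g$ with $\E[g(\HH)^2]<\infty$, I will write $\acc-g(\HH)=(\acc-\E[\acc\mid\HH])+(\E[\acc\mid\HH]-g(\HH))$. The second term is $\sigma(\HH)$-measurable, so the tower property makes the cross term vanish. This gives
\[
\E[(\acc-g(\HH))^2]=\E[\Var(\acc\mid\HH)]+\E[(\E[\acc\mid\HH]-g(\HH))^2],
\]
which is minimized exactly at $g=\E[\acc\mid\HH]$. Since $R^2=1-\E[(\acc-g)^2]/\Var(\acc)$, the best attainable value is $1-\E[\Var(\acc\mid\HH)]/\Var(\acc)$. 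The law of total variance, $\Var(\acc)=\E[\Var(\acc\mid\HH)]+\Var(\E[\acc\mid\HH])$, then gives the equivalent form $\Var(\E[\acc\mid\HH])/\Var(\acc)$. I need $\acc\le\Lmax$ here, so all moments are finite, and $\Var(\acc)>0$. One point needs a remark: if $R^2$ is defined as a squared correlation instead of via mean squared error, the bound still holds, because affine recalibration of $g$ does not change its correlation and the mean squared error identity then applies.

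\emph{Within-round variance.} Given $\HH$, the model has $\Prob[\acc\ge\ell\mid\HH]=\pmatch^\ell$ for $\ell\le\Lmax$. In the untruncated limit, $\acc$ is geometric with mean $\mu=\pmatch/(1-\pmatch)$ and variance $\pmatch/(1-\pmatch)^2=\mu(1+\mu)$, which is at least $\mu^2$. For the truncated case I will use $\E[\acc^2]=\sum_{\ell\le\Lmax}(2\ell-1)\pmatch^\ell$ and show that the variance equals $\mu(1+\mu)$ up to corrections of relative order $\pmatch^{\Lmax}$, the same error as in Lemma~\ref{lem:trunc}. That gives $\Var(\acc\mid\HH)\asymp\E[\acc\mid\HH]^2$ away from the entropy floor.

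\emph{Conclusion and main obstacle.} Substituting, $\E[\Var(\acc\mid\HH)]\gtrsim\E[\mu(\HH)^2]=\Var(\mu)+(\E\mu)^2$, which yields
\[
\bar R^2\lesssim\frac{\Var(\mu)}{2\Var(\mu)+(\E\mu)^2}.
\]
This is at most $1/2$, and it is small whenever the coefficient of variation of $\mu(\HH)$ across rounds is modest. Making ``intrinsically small'' precise is the main obstacle, because the claim is qualitative. I would state it as this explicit bound in terms of the spread of $\mu(\HH)$, and note that it holds for any predictor because it depends only on the conditional law of $\acc$. The truncated regime near $\HH=0$, where $\pmatch^{\Lmax}$ is not negligible, is where the $\asymp$ step weakens. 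There I would fall back on the cruder bound $\Var(\acc\mid\HH)\ge\mu(1+\mu)(1-c\,\pmatch^{\Lmax})$, for an explicit constant $c$ obtained from the finite sums.
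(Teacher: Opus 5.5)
Your proposal is correct and is essentially the paper's proof. It uses the same orthogonal decomposition of $\E[(\acc-g(\HH))^2]$, minimized at $\E[\acc\mid\HH]$, the law of total variance for the second form of $\bar R^2$, and the untruncated geometric variance $\mu(1+\mu)\ge\mu^2$; your explicit bound $\bar R^2\lesssim\Var(\mu)/(2\Var(\mu)+(\E\mu)^2)$ makes precise the ``intrinsically small'' claim, which the paper leaves qualitative and supports only with nonparametric ceiling estimates, and your caveat near the floor is warranted, since the truncated variance vanishes as $\pmatch\to1$. One small correction to your truncation step: the relative shortfall of $\Var(\acc\mid\HH)$ below $\mu(1+\mu)$ is not of order $\pmatch^{\Lmax}$ but reaches about $2\Lmax\pmatch^{\Lmax}$ (its limit as $\pmatch\to0$), so the constant $c$ in your fallback bound must be about $2\Lmax$ rather than an absolute constant.
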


\begin{proof}
For any $g$, orthogonality of the conditional mean gives
\[
\begin{aligned}
\E[(\acc-g(\HH))^2]&=\E[\Var(\acc\mid\HH)]\\
&\quad+\E[(\E[\acc\mid\HH]-g(\HH))^2],
\end{aligned}
\]
which is minimized at $g=\E[\acc\mid\HH]$, and the law of total variance gives the stated ratio. For the untruncated geometric, with $\mu=\E[\acc\mid\HH]$,
\[
\Var(\acc\mid\HH)=\mu(1+\mu)\ge\mu^2,
\]
so the within-round variance dominates unless $\Var(\mu(\HH))$ is comparably large. A nonparametric estimate of the between-to-total variance over $50$ bins gives ceilings of $0.110$, $0.144$, and $0.188$ on captioning, TextVQA, and DocVQA. The law's round-level $R^2$ values of $0.097$, $0.091$, and $0.074$ therefore realize $89\%$, $63\%$, and $39\%$ of what any entropy-only predictor could reach, and Table~\ref{tab:lawfit} lists the same fractions for InfoVQA and ChartQA. This is why the curve $R^2$ is the appropriate score.
\end{proof}

\subsection{The Grounded Tail}
\label{app:theory-tail}

\begin{assumption}[Two-state determinate or uncertain survival]
\label{asm:twostate}
Conditioned on the round, the matches along a block follow a two-state hidden Markov chain over $s_j\in\{D,U\}$, with entry distribution $(\pi,1-\pi)$, match probability $\rho_s$ in state $s$, transition matrix $T$ upon a match, the run ending at the first miss, and a cap at $\Lmax$. State $D$ models verbatim copying ($\rho_D\to1$, sticky $t_{DD}>0$), and $U$ is the ordinary regime with a moderate hazard. The parameters governing $\HH\to0$ are estimated in regime, from near-zero-entropy rounds only, rather than extrapolated from the affine fit. The geometric law of Lemma~\ref{lem:trunc} is the degenerate case $\rho_D=\rho_U$.
\end{assumption}

\begin{thmtailfull}
Let $M(p)=\sum_{\ell=1}^{\Lmax}p^{\ell}$. (A)~Under the single-hazard law of Equation~\eqref{eq:law}, $\sup_{\HH\ge0}\Eacc=M(\sigm(b_0))\le e^{b_0}$, a rigorous cap that needs no fit. (B)~Under Assumption~\ref{asm:twostate}, with $\mathbf v=(\pi,1-\pi)$, $D_{\boldsymbol\rho}=\operatorname{diag}(\rho_D,\rho_U)$, and $K=TD_{\boldsymbol\rho}$,
\begin{equation}
\label{eq:hmm}
\begin{aligned}
\Prob[\acc\ge\ell\mid\HH]&=\mathbf v D_{\boldsymbol\rho}K^{\ell-1}\mathbf 1,\\
\Eacc&=\mathbf v D_{\boldsymbol\rho}\Big(\textstyle\sum_{m=0}^{\Lmax-1}K^{m}\Big)\mathbf 1,
\end{aligned}
\end{equation}
whose effective hazard $h_\ell$ is not constant. Taking $\rho_D,t_{DD}\to1$ gives $\Eacc^{\HH\to0}\to\pi\Lmax+(1-\pi)M(\rho_U)$, which exceeds $e^{b_0}$ exactly when $\pi>\pi^\star:=(e^{b_0}-M(\rho_U))/(\Lmax-M(\rho_U))$. This threshold lies in $(0,1)$ because $M(\rho_U)<e^{b_0}<\Lmax$, where on DocVQA $\Lmax{=}15$ and $e^{b_0}{=}3.41$. (C)~With in-regime maximum-likelihood parameters, Equation~\eqref{eq:hmm} recovers the measured DocVQA floor mean of $5.23$, against the geometric cap of $3.41$. Proposition~\ref{prop:falsify} gives the falsifiable validation.
\end{thmtailfull}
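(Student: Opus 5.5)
Part (A) is elementary, so I would handle it first. Under Equation~\eqref{eq:law}, $\pmatch(\HH)=\sigm(b_0-b_1\HH)$ with $b_1>0$, so $\pmatch$ is strictly decreasing in $\HH$ on $[0,\infty)$. $M(p)=\sum_{\ell=1}^{\Lmax}p^\ell$ is strictly increasing in $p\in(0,1)$, so the supremum over $\HH\ge0$ is attained at $\HH=0$ and equals $M(\sigm(b_0))$. For the cap, I would bound the truncated sum by the untruncated one, as in Lemma~\ref{lem:trunc}: $M(p)\le p/(1-p)$. With $p=\sigm(b_0)=e^{b_0}/(1+e^{b_0})$ this gives $p/(1-p)=e^{b_0}$ exactly, so $M(\sigm(b_0))\le e^{b_0}$ with no fitted quantity beyond $b_0$ itself.

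For (B), I would derive Equation~\eqref{eq:hmm} by forward recursion on the hidden chain. Let $\alpha_\ell$ be the row vector whose entry $s$ is the probability that the first $\ell$ offsets all match and the state at offset $\ell$ is $s$ (before transition). Then $\alpha_1=\mathbf v D_{\boldsymbol\rho}$, since we enter with $\mathbf v$ and match with $\rho_s$. Each further offset applies $T$ and then a match, $\alpha_{\ell+1}=\alpha_\ell T D_{\boldsymbol\rho}=\alpha_\ell K$. Hence $\Prob[\acc\ge\ell\mid\HH]=\alpha_\ell\mathbf 1=\mathbf v D_{\boldsymbol\rho}K^{\ell-1}\mathbf 1$. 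Summing the tail formula $\E[\acc]=\sum_{\ell=1}^{\Lmax}\Prob[\acc\ge\ell]$, which is valid because the cap is at $\Lmax$, gives the stated mean. For non-constant hazard I would compute $h_\ell=\Prob[\acc\ge\ell+1]/\Prob[\acc\ge\ell]$ as a ratio of mixtures of the two eigen-directions of $K$. Whenever $\rho_D\ne\rho_U$ and $\pi\in(0,1)$, this ratio shifts weight toward the stickier state, so it varies with $\ell$. The degenerate case $\rho_D=\rho_U$ makes $D_{\boldsymbol\rho}$ scalar, and since $T$ is stochastic, $K^{m}\mathbf 1=\rho^m\mathbf 1$, which recovers Lemma~\ref{lem:trunc}.

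For the limit, I would set $\rho_D=t_{DD}=1$, which makes $D$ absorbing with certain matches and contributes exactly $\Lmax$ from entry mass $\pi$. The delicate point, which I expect to be the main obstacle, is the $U$-branch. Obtaining $M(\rho_U)$ needs either $t_{UD}=0$ or an interpretation in which the $U$ run never transitions into copying. I would state this explicitly as the reading of Assumption~\ref{asm:twostate} in the limit, namely a block-diagonal $T$, and argue continuity of the finite sum in the parameters so that the limit is taken legitimately. The threshold then follows by solving the linear inequality $\pi\Lmax+(1-\pi)M(\rho_U)>e^{b_0}$. This is affine in $\pi$ with positive slope $\Lmax-M(\rho_U)$, which gives $\pi^\star$. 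Membership in $(0,1)$ needs $M(\rho_U)<e^{b_0}<\Lmax$. The right inequality is checked numerically (e.g.\ $3.41<15$). For the left, I would take the ordinary state's hazard to be at most $\sigm(b_0)$ and invoke (A).

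Part (C) is empirical rather than deductive. I would present it as fitting $(\pi,\rho_D,\rho_U,T)$ by maximum likelihood on the DocVQA bottom-decile rounds, then evaluating Equation~\eqref{eq:hmm} and reporting agreement with $5.23$ against the cap $3.41$. Validation on prompt-disjoint splits is deferred to Proposition~\ref{prop:falsify}.
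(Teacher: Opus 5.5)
Your proposal follows the paper's proof essentially step for step. You use monotonicity plus $M(p)\le p/(1-p)$ with $\sigm(b_0)/(1-\sigm(b_0))=e^{b_0}$ for (A). For (B) you use the forward recursion $\alpha_{\ell+1}=\alpha_\ell K$, the tail-sum identity, and an eigenmode argument for the hazard. The threshold comes from an affine inequality in $\pi$, and (C) is an empirical fit.

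Your explicit requirement $t_{UD}=0$ is correct, and it is actually needed. The paper's proof only writes $\Prob[\acc\ge\ell]\to\pi$, which tracks the copy branch alone. With $t_{UD}>0$ the ordinary branch can fall into the absorbing copy state. The limit is then larger than $\pi\Lmax+(1-\pi)M(\rho_U)$, and the ``exactly when $\pi>\pi^\star$'' claim breaks. Similarly, the paper simply asserts $M(\rho_U)<e^{b_0}$, whereas your extra hypothesis $\rho_U\le\sigm(b_0)$ yields it. Note that strictness comes from $M(p)<p/(1-p)$ on $(0,1)$, not from (A) as stated.

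One step is overstated, in your proposal as in the paper. The conditions $\rho_D\ne\rho_U$ and $\pi\in(0,1)$ do not by themselves make the hazard vary with $\ell$. Suppose both rows of $T$ equal $\mathbf v$. Then $K\mathbf 1=(\mathbf vD_{\boldsymbol\rho}\mathbf 1)\mathbf 1$, so $\Prob[\acc\ge\ell\mid\HH]=\bar\rho^{\,\ell}$ with $\bar\rho=\pi\rho_D+(1-\pi)\rho_U$. That is a constant hazard, even though $t_{DD}=\pi>0$. More generally this happens whenever $\mathbf v$ is a left eigenvector of $D_{\boldsymbol\rho}T$. In that case the state distribution conditioned on survival stays equal to $\mathbf v$, so your ``weight shifts toward the stickier state'' step fails.

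What you can prove, and what the grounded-tail argument needs, is non-constancy in the limiting regime. With $\rho_D=t_{DD}=1$ and $t_{UD}=0$, the survival is $S_\ell=\pi+(1-\pi)\rho_U^{\ell}$. The hazard is then $h_\ell=S_\ell/S_{\ell-1}=1-(1-\rho_U)(1-\pi)\rho_U^{\ell-1}/S_{\ell-1}$, which is strictly increasing in $\ell$ for $\pi,\rho_U\in(0,1)$.
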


\begin{proof}
\begin{enumerate}
\item[(A)] $M$ is strictly increasing on $(0,1)$, and $\pmatch(\HH)$ is maximized as $\HH\to0$, where it equals $\sigm(b_0)$. Then $M(p)\le p/(1-p)$ gives the bound $e^{b_0}$.
\item[(B)] Starting from $\mathbf v$, the joint event of a match at offset $1$ followed by state $s$ has mass $(\mathbf v D_{\boldsymbol\rho})_s$, and each accepted offset multiplies by $K=TD_{\boldsymbol\rho}$, so
\[
\Prob[\acc\ge\ell\mid\HH]=\mathbf v D_{\boldsymbol\rho}K^{\ell-1}\mathbf 1,
\]
and the tail-sum identity gives the mean. Survival is a nonnegative combination of the two eigenmodes of $K$, so the hazard $h_\ell$ moves monotonically toward $\lambda_{\max}(K)$ and is not constant unless $\rho_D=\rho_U$. Its first value $\pi\rho_D+(1-\pi)\rho_U$ can approach $1$ and exceed $\sigm(b_0)$, which the constant-hazard family cannot do. As $\rho_D,t_{DD}\to1$,
\[
\Prob[\acc\ge\ell]\to\pi\quad\text{for all }\ell\le\Lmax,
\]
which gives the stated excess over the cap.
\item[(C)] This is the numerical instantiation, which recovers the measured floor mean against the geometric cap.\qedhere
\end{enumerate}
\end{proof}

\begin{proposition}[Prompt-disjoint held-out falsification]
\label{prop:falsify}
Take the $122$ bottom-entropy-decile DocVQA rounds, from $47$ prompts, and split them by prompt into disjoint sets. (i)~On the full floor the geometric run length is rejected, with Pearson $\chi^2=105.3$, $\mathrm{df}=7$, $p=8.7\times10^{-20}$, while the likelihood ratio decisively favors the two-state form, with $2\Delta\ell=86.5$, $\mathrm{df}=4$, $p=7.2\times10^{-18}$. (ii)~Frozen on one prompt set and scored on the disjoint one, the two-state model attains a smaller held-out $\chi^2$ than the geometric in $6$ of $7$ tests, over both split directions and five cross-validation folds, with a mean held-out $\chi^2$ of $4.4$ against $11.9$.
\end{proposition}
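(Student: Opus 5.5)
The statement is Proposition~\ref{prop:falsify}. It is an empirical claim, so the ``proof'' is a specified computation on the logged DocVQA rounds, and the plan is to fix every modeling and testing choice before touching the held-out data.

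\textbf{Data.} I would extract from the GLANCE round log all DocVQA rounds whose root entropy $\HH$ falls in the bottom decile, giving $122$ rounds from $47$ prompts. For each round I record the accepted length $\acc\in\{0,\dots,\Lmax\}$.

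\textbf{Competing models.} There are two, both treated as censored run-length distributions with the cap at $\Lmax$.
\begin{itemize}
\item The geometric model is the degenerate case of Assumption~\ref{asm:twostate}, with a single hazard $\rho$ and $\Prob[\acc\ge\ell]=\rho^\ell$.
\item The two-state model uses the survival function of Equation~\eqref{eq:hmm}. Its free parameters are $\pi,\rho_D,\rho_U$ and the two off-diagonal transition probabilities of $T$.
\end{itemize}
This gives $5$ parameters against $1$, which explains $\mathrm{df}=4$ in the likelihood ratio.

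\textbf{Part (i).} I would first fit both models by maximum likelihood on all $122$ rounds. For the geometric model this is closed form: $\hat\rho$ equals the total number of matches divided by matches plus misses, with capped rounds contributing no miss. For the two-state model I would run EM, or a direct optimizer on the log-likelihood $\sum\log\Prob[\acc=a]$, using several restarts because the mixture likelihood is multimodal.

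The Pearson test needs binned cells. I would bin $\acc$ into $9$ cells, merging the sparse high values and keeping the capped cell $\acc=\Lmax$ separate, since it carries the verbatim-copy mass. One fitted parameter then leaves $\mathrm{df}=9-1-1=7$. I would compute $\chi^2$ against the fitted geometric expectations and read $p$ from $\chi^2_7$. The likelihood-ratio statistic is $2(\hat\ell_{2s}-\hat\ell_{\mathrm{geo}})$, compared to $\chi^2_4$.

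\textbf{Part (ii).} I would partition the $47$ prompts, never the rounds, so that no prompt contributes to both fitting and scoring.
\begin{itemize}
\item One random two-way split, used in both directions, gives $2$ tests.
\item A $5$-fold prompt cross-validation gives $5$ more, for $7$ tests in total.
\end{itemize}
In each test I fit both models on the training prompts, freeze them, and compute the Pearson $\chi^2$ on the held-out rounds with the same binning rule. I then count the tests in which the two-state model has the lower $\chi^2$, and average the statistics.

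\textbf{Main obstacle.} The likely difficulty is identifiability and boundary behavior of the two-state fit on only about $60$ to $100$ rounds. Both $\rho_D$ and $t_{DD}$ may run to $1$, placing the fit on the boundary of the parameter space. There the nominal $\chi^2_4$ reference for the likelihood ratio is only approximate. Small held-out folds also make expected cell counts tiny, which destabilizes Pearson's statistic.

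I would first try these remedies:
\begin{itemize}
\item Fix the binning and the state-labeling convention $\rho_D\ge\rho_U$ in advance.
\item Merge cells until expected counts are at least about $1$ to $5$.
\item Check the likelihood-ratio $p$-value with a parametric bootstrap under the fitted geometric null, since the boundary effect only makes the asymptotic test conservative, and report it if it differs.
\end{itemize}
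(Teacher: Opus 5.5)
Your proposal matches the paper, whose proof of Proposition~\ref{prop:falsify} likewise only records the outcome of this estimation procedure, and your counts ($5$ free two-state parameters against $1$, and $9-1-1$ Pearson cells) reproduce the stated $\mathrm{df}=4$ and $\mathrm{df}=7$. One caution on your side remark: a round's run length depends on the parameters only through $S(\ell)=\mathbf v D_{\boldsymbol\rho}K^{\ell-1}\mathbf 1$, which by Cayley--Hamilton obeys $S(\ell+2)=\operatorname{tr}K\,S(\ell+1)-\det K\,S(\ell)$ and so pins down at most four combinations of the five parameters, while under the geometric null $\pi$ and $T$ are unidentified altogether, so the $\chi^2_4$ reference is a nominal count rather than a Wilks limit and is not guaranteed to be conservative, which is exactly why your parametric bootstrap is the right check.
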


\begin{proof}
The proposition records the outcome of the stated estimation procedure, with the construction and statistics as listed.
\end{proof}

\section{Decoding Round}
\label{app:algo}

Algorithm~\ref{alg:round} gives one decoding round of GLANCE. The draft pass and the verify pass are the only two forward passes of a round, and the tree budget $\budget$ enters only the size of the verify pass.

\begin{algorithm}[t]
\caption{One decoding round of GLANCE}
\label{alg:round}
\small
\begin{algorithmic}[1]
\Require cached prefix $x$ with the target's fused states, pending root $b$, budget $\budget$, horizon $\Lmax$
\State $(q_1,\dots,q_\Lmax)\gets\textsc{BlockHead}(x,b)$ \Comment{one draft pass}
\State $\Tree\gets$ the $\budget$ prefixes $y_{1:\ell}$ of highest $\prod_{j\le\ell}q_j(y_j)$
\State pack $b$ and $\Tree$ with an ancestor mask
\State rows $\gets\textsc{Target}(x,\text{packed tree})$ \Comment{one verify pass}
\State $v\gets b$;\ $\text{out}\gets[\,]$
\Loop
  \State $t\gets\argmax_w \text{rows}[v](w)$ \Comment{target greedy token}
  \If{child $v\circ t\in\Tree$}
    \State append $t$ to out;\ $v\gets v\circ t$
  \Else
    \State \Return out, new root $t$
  \EndIf
\EndLoop
\end{algorithmic}
\end{algorithm}

\section{Implementation Details}
\label{app:impl}

\paragraph{Primary drafter.}
The primary drafter is a block-diffusion head. It reads the target's fused hidden states and proposes the whole block in one forward pass, with the vision tower and the target decoder frozen. We use the checkpoint at the end of its one-epoch run, and Table~\ref{tab:traincard} lists its configuration. The remaining constants are an intermediate size of $12288$, a block horizon $\Lmax{=}15$, an image long side of $768$ pixels, AdamW betas of $0.9$ and $0.95$ with no weight decay, gradient clipping at $1.0$, seed $0$, one A6000-class GPU, and bf16 weights of $2.1$\,GB.

\begin{table}[t]
\centering
\small
\fitcol{%
\begin{tabular}{ll}
\toprule
Setting & Value \\
\midrule
\grp{2}{Architecture}
layers & $5$ \\
parameters & $1.05$B \\
block size $B$ & $16$ \\
hidden size & $4096$ \\
attention heads & $32\times128$ \\
kept target layers & $\{1,9,17,25,33\}$ \\
\grp{2}{Objective}
loss & coverage CE, offset weight $0.8^{j}$ \\
reveal & variable-mask block prefix \\
\grp{2}{Data}
source & target greedy, $256$ new tokens \\
prompts & $8000$ COCO $+$ $8000$ TextVQA \\
rows & $16$K teacher-forced \\
\grp{2}{Optimization}
optimizer, lr & AdamW, $4\times10^{-5}$ \\
epochs, accumulation & $1$, $8$ \\
max sequence length & $2048$ \\
warm start & Qwen3-8B DFlash head \\
\bottomrule
\end{tabular}}
\caption{Training card of the primary Qwen3-VL-8B drafter.}
\label{tab:traincard}
\end{table}

\paragraph{Evaluation prompts.}
The drafter trains on COCO-Caption2017 \texttt{val} rows $100$ to $8099$ and on TextVQA \texttt{validation} rows $0$ to $8005$. The home-target head trains on the first $8000$ rows of the same two splits. Every GLANCE row of the two main tables therefore takes its captioning and TextVQA prompts from the \texttt{test} split of each source, with no overlap with any training row, namely $100$ prompts in each task on Qwen3-VL-8B and $200$ captioning and $120$ TextVQA prompts on the home target, the counts of ViSpec's own evaluation. In Table~\ref{tab:sglang}, all three systems decode these same \texttt{test} prompts. The other baselines of Table~\ref{tab:headline} draw their prompts from the \texttt{validation} rows on Qwen3-VL-8B, and the released ViSpec head runs its native loaders on its home target. InfographicVQA, DocVQA, and ChartQA appear in no training corpus of the primary head. The analyses of Figures~\ref{fig:systems} and~\ref{fig:analysis} compare conditions of one head on shared prompts and draw their captioning and TextVQA prompts from the \texttt{validation} rows. Every system of the two main tables on Qwen3-VL-8B, apart from the ViSpec-codebase heads with their official loaders, uses the same prompt form for each task. Captioning prompts ask the target to describe the image in detail, and TextVQA prompts are the dataset questions as written. The DocVQA prompt reads ``Read the document image and answer the question. Quote the exact text from the document, then briefly explain.'' before the question, and the InfographicVQA and ChartQA prompts ask in the same way for the exact text, numbers, values, or labels the target reads, followed by an explanation, so that these tasks also leave a decode loop to shorten.

\paragraph{Second and third targets.}
The Qwen2.5-VL-7B head is trained from scratch with the OCR-augmented recipe on that target's own greedy generations, with $8000$ caption, $8000$ TextVQA, and $10{,}180$ OCR rows from DocVQA, ChartQA, and InfoVQA prompts disjoint from the evaluation prompts, $26{,}180$ rows in total, for seven epochs at global batch $48$. It is evaluated with the same budget-$63$ tree and fp32 audit as the primary target, with captioning and TextVQA prompts from the \texttt{test} splits. Prompt counts match the ViSpec row task by task, $200$ for captioning and DocVQA and $120$ for TextVQA, which is where ViSpec's staged image set ends, and the fp32 audit finds zero divergences at eight prompts in each task. The Llama-3.2-11B-Vision adapter is warm-started from the released Llama-3.1-8B DFlash head, whose held-out top-$1$ accuracy at the first offset climbs from $0.71$ to $0.78$.

\paragraph{Matched-training comparison.}
Both architectures are trained from scratch under one protocol, with the same $26{,}196$-row OCR-augmented corpus, the same frozen target, global batch $24$, three epochs, and SpecForge, and with each method's default learning rate scaled by the square root of the batch. Each drafter then runs at its standard operating point, ours with a budget-$63$ tree and the EAGLE-3 head as a depth-$3$ chain, greedy, with $64$ new tokens, in one Hugging Face implementation on $256$ held-out prompts. Pooled over the five tasks the acceptance ratio is $2.73$, at $4.37$ against $1.60$, and a depth-$7$ chain adds $0.01$ to the EAGLE-3 head. The ALLaVA replication retrains both heads on an ALLaVA-Instruct corpus~\citep{chen2024allava} under the same protocol and gives $2.04$ against $1.29$ pooled over the five tasks. Timed in the same implementation on one A100, with $20$ prompts in each task and $128$ new tokens, GLANCE decodes at $2.36$ to $2.59\times$ autoregressive decoding against $1.13$ to $1.17\times$ for the EAGLE-3 head.

The same two heads also run in SGLang under the protocol of Table~\ref{tab:sglang}, with the tree of $32$ draft tokens, $101$ prompts in each task after a warm-up prompt, and $256$ new tokens. Each task's two arms decode back to back on one RTX A6000. The corpus holds DocVQA and InfographicVQA \texttt{validation} rows and ChartQA \texttt{test} rows, so these three tasks take their prompts from the DocVQA and InfographicVQA \texttt{test} splits and the ChartQA \texttt{validation} split under the corpus's prompt template, and captioning and TextVQA use the \texttt{test} prompts of Table~\ref{tab:sglang}. Table~\ref{tab:matchedsglang} reports the result. GLANCE is faster on $73$ to $100$ of the $101$ prompts of each task.

\begin{table}[t]
\centering
\setlength{\tabcolsep}{3pt}\footnotesize
\fitcol{%
\begin{tabular}{lccccc}
\toprule
 & \multicolumn{2}{c}{EAGLE-3 head} & \multicolumn{2}{c}{GLANCE} & \\
\cmidrule(lr){2-3}\cmidrule(lr){4-5}
Task & $\tau\ \uparrow$ & ms/tok $\downarrow$ & $\tau\ \uparrow$ & ms/tok $\downarrow$ & GLANCE faster by \\
\midrule
Captioning & $3.29$ & $12.41$ & \g{$3.22$} & \g{$\mathbf{11.87}$} & $\mathbf{+4.5\%}$ \small{$[3.3, 5.8]$} \\
TextVQA & $2.85$ & $14.36$ & \g{$\mathbf{3.11}$} & \g{$\mathbf{12.50}$} & $\mathbf{+14.9\%}$ \small{$[12.7, 17.2]$} \\
InfographicVQA & $2.78$ & $15.06$ & \g{$\mathbf{3.06}$} & \g{$\mathbf{12.96}$} & $\mathbf{+16.1\%}$ \small{$[14.0, 18.3]$} \\
DocVQA & $3.00$ & $15.19$ & \g{$\mathbf{3.45}$} & \g{$\mathbf{12.40}$} & $\mathbf{+22.5\%}$ \small{$[19.0, 26.8]$} \\
ChartQA & $3.49$ & $11.43$ & \g{$\mathbf{4.13}$} & \g{$\mathbf{9.10}$} & $\mathbf{+25.7\%}$ \small{$[23.4, 28.0]$} \\
\bottomrule
\end{tabular}}
\caption{Matched-training heads in SGLang under the shared tree of $32$ draft tokens. Brackets give paired bootstrap $95\%$ intervals, and blue marks GLANCE.}
\label{tab:matchedsglang}
\end{table}

\paragraph{The released text head without our training.}
GLANCE starts from the released block head for Qwen3-8B~\citep{chen2026dflash}, which has the same configuration and weight shapes. Run unchanged on Qwen3-VL-8B under the protocol of Table~\ref{tab:sglang}, with EAGLE3-VL decoding the same prompts back to back on one RTX A6000, it is slower than EAGLE3-VL on every task, and every paired bootstrap interval lies below zero, as Table~\ref{tab:untrained} shows. EAGLE3-VL is the faster arm on at least $66$ of the $101$ prompts of each task. Our training lengthens the head's accepted blocks by $17$ to $21\%$.

\begin{table}[t]
\centering
\setlength{\tabcolsep}{3pt}\footnotesize
\fitcol{%
\begin{tabular}{lcccc}
\toprule
 & \multicolumn{3}{c}{$\tau\ \uparrow$} & \\
\cmidrule(lr){2-4}
Task & EAGLE3-VL & Released & GLANCE & Released faster by \\
\midrule
Captioning & $3.51$ & $2.40$ & \g{$2.92$} & $-26.4\%$ \small{$[-27.4, -25.3]$} \\
TextVQA & $4.31$ & $2.96$ & \g{$3.57$} & $-26.2\%$ \small{$[-27.9, -24.4]$} \\
InfographicVQA & $3.51$ & $3.07$ & \g{$3.65$} & $-7.3\%$ \small{$[-9.2, -5.4]$} \\
DocVQA & $3.68$ & $3.34$ & \g{$3.91$} & $-4.4\%$ \small{$[-7.0, -1.8]$} \\
ChartQA & $4.50$ & $3.91$ & \g{$4.69$} & $-7.8\%$ \small{$[-9.6, -5.8]$} \\
\midrule
Geometric mean & & & & $-15.0\%$ \small{$[-15.8, -14.1]$} \\
\bottomrule
\end{tabular}}
\caption{The released text head for Qwen3-8B, run unchanged on Qwen3-VL-8B in SGLang under the shared tree of $32$ draft tokens, against EAGLE3-VL on the same prompts. Margins follow Table~\ref{tab:sglang}, with paired bootstrap $95\%$ intervals. The GLANCE column repeats Table~\ref{tab:sglang}. Blue marks GLANCE.}
\label{tab:untrained}
\end{table}

\paragraph{ViSpec's corpus and training length.}
The training rows are the $68$K LLaVA-Pretrain questions and images regenerated under ViSpec's own data pipeline, with the same source file, shuffle seed, prompt template, pixel bounds, and sampling temperature, and with responses produced by the frozen target, since self-generation is part of our recipe. The head then trains with the published card unchanged, once for one epoch and once for twenty-one, the latter continuing from the one-epoch checkpoint at the same constant learning rate and global batch of eight rows. Our own corpus receives the same treatment. All arms are evaluated with the budget-$63$ tree, branching $8$, $100$ prompts in each task, $256$ new tokens, and the fp32 audit at eight prompts in each task, back to back on one otherwise idle GPU. Figure~\ref{fig:epochcurve} evaluates every checkpoint of both runs. Acceptance saturates within about five epochs on either corpus, and twenty-one epochs add $3.4\%$ on our corpus and $0.9\%$ on ViSpec's.

\begin{figure}[t]
\centering
\includegraphics[width=\columnwidth]{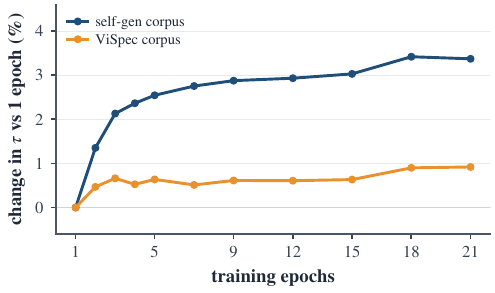}
\caption{Change in pooled acceptance against the one-epoch head, for every checkpoint of the two $21$-epoch runs, read with the budget-$63$ tree at $50$ prompts in each task.}
\label{fig:epochcurve}
\end{figure}

\paragraph{ViSpec-codebase heads on our target.}
These heads are trained on Qwen3-VL-8B-Instruct with ViSpec's official codebase and two-stage schedule, namely text-only drafter pre-training on the authors' ShareGPT split to convergence followed by multimodal fine-tuning on target-generated data, and Medusa is trained through the same codebase. They decode with the codebase's tree search and acceptance rule under its official tree of $30$ tokens at depth $3$ with top-$8$ branching. We port the codebase to Qwen3-VL-8B so that the target runs its reference forward pass, with multimodal rotary positions and deepstack visual features. Each head decodes the $100$ prompts of each task that the GLANCE row uses, with the same $896$-pixel images, greedy, batch one, and up to $256$ new tokens. The EAGLE-2 row is this recipe without the vision adaptor, and the ViSpec row is the same schedule with the adaptor, both trained from the same stage-one checkpoint, data, and seed, so the gap between the rows is the adaptor's contribution alone.

\paragraph{ViSpec on its home target.}
The released ViSpec head for Qwen2.5-VL-7B-Instruct runs with its native loaders and its default tree of $30$ tokens at depth $3$, with top-$8$ branching and two query heads, greedy, batch one, and up to $1024$ new tokens. ViSpec logs the bonus-excluded length, which we raise by the committed token.

\paragraph{Other protocol details.}
Tables~\ref{tab:sglang} and~\ref{tab:headline} run the same GLANCE head, the first in SGLang with images at native resolution and a tree of $32$ draft tokens, the second in our Hugging Face implementation at $896$ pixels with the budget-$63$ tree, so their acceptance lengths differ. In Table~\ref{tab:sglang} the three arms of each task decode back to back on one RTX A6000, and GLANCE's draft pass replays from a CUDA graph over a static key-value cache, with one graph for each number of tokens committed in the previous round. The mean entropy $\bar{\HH}$ of Table~\ref{tab:sglang} is the target's mean root entropy over the rounds of the law fits in Section~\ref{sec:exp-law}. EAGLE3-VL, the production AQ-MedAI checkpoint for this target, and the n-gram baseline run in vLLM $0.22.1$, with $20$ prompts in each task and the warm-up sample excluded, and EAGLE3-VL runs as a chain of $5$, which Table~\ref{tab:gammasweep} sweeps from $3$ to $15$. Prompt lookup proposes a draft only when an n-gram match exists, so its acceptance length is the length of the blocks it does propose. The classic two-model rows are the endpoints of the three-draft sweep of Table~\ref{tab:classicsd}. The mismatched-image probe uses $60$ prompts in each task, with the three image conditions scored on a shared teacher-forced trajectory and prompt-clustered bootstrap intervals. Speedup is the ratio of mean decode time for a token over prompts.

\paragraph{Output equivalence.}
Our implementation decodes $20$ prompts in each of three tasks, and each system's generated tokens are compared position by position with the target's own greedy decoding of the same prompts, in the same arithmetic. Table~\ref{tab:equiv} reports the result. Decoding the target twice with no drafter shows that bf16 identity measures kernel arithmetic rather than any drafter, so the guarantee is decided in fp32, where both the drafter-free control and GLANCE reach $60$ of $60$. The ViSpec-codebase heads and the released ViSpec head, compared the same way on $21$ prompts in each task with up to $512$ new tokens, reproduce all $63$ in fp32, as exact acceptance requires.

\begin{table}[t]
\centering
\setlength{\tabcolsep}{4pt}\footnotesize
\fitcol{%
\begin{tabular}{llc}
\toprule
System & Arithmetic & Identical \\
\midrule
\grp{3}{Our implementation}
autoregressive, no drafter & bf16 & $25/60$ \\
autoregressive, no drafter & fp32 & $60/60$ \\
\rowcolor{oursrow} GLANCE, budget-$63$ tree & bf16 & $27/60$ \\
\rowcolor{oursrow} GLANCE, budget-$63$ tree & fp32 & $\mathbf{60/60}$ \\
\rowcolor{oursrow} GLANCE, matched-training head & bf16 & $33/60$ \\
EAGLE-3 head, matched training & bf16 & $19/60$ \\
\grp{3}{ViSpec codebase}
EAGLE-2, ViSpec codebase & fp32 & $63/63$ \\
ViSpec, official recipe & fp32 & $63/63$ \\
Medusa, ViSpec codebase & fp32 & $63/63$ \\
ViSpec, released head, home target & fp32 & $63/63$ \\
\bottomrule
\end{tabular}}
\caption{Output equivalence with the target's greedy decoding, with both arms in the stated arithmetic. Blue marks GLANCE.}
\label{tab:equiv}
\end{table}

\paragraph{Sampling.}
At temperature $1$, with a top-$p$ of $1$, no top-$k$ cut, and the protocol of Table~\ref{tab:sglang} otherwise unchanged, GLANCE is faster than EAGLE3-VL on the three lower-entropy tasks, by $7.5\%$ on InfographicVQA, $14.9\%$ on DocVQA, and $9.9\%$ on ChartQA. Every paired bootstrap interval lies above zero, GLANCE is the faster arm on at least $73$ of the $101$ prompts of each of these tasks, and the geometric-mean margin over the five tasks is $+2.0\%$, with a $95\%$ interval from $0.8$ to $3.2\%$. Both arms verify with SGLang's tree sampling, which commits draws from the target's own distribution.

\begin{table}[t]
\centering
\setlength{\tabcolsep}{4pt}\footnotesize
\fitcol{%
\begin{tabular}{lcccccc}
\toprule
 & \multicolumn{3}{c}{acceptance length $\tau$} & \multicolumn{3}{c}{ms/token} \\
\cmidrule(lr){2-4}\cmidrule(lr){5-7}
Chain length & Cap. & TVQA & Doc & Cap. & TVQA & Doc \\
\midrule
$3$ & $\mathbf{2.17}$ & $2.47$ & $2.36$ & $6.77$ & $6.78$ & $7.46$ \\
$5$ & $2.10$ & $\mathbf{2.65}$ & $\mathbf{2.56}$ & $7.54$ & $6.93$ & $7.62$ \\
$7$ & $1.97$ & $2.43$ & $2.50$ & $8.63$ & $8.42$ & $8.16$ \\
$10$ & $1.88$ & $2.24$ & $2.37$ & $9.93$ & $9.32$ & $9.19$ \\
$15$ & $1.80$ & $2.23$ & $2.33$ & $11.96$ & $10.68$ & $10.68$ \\
\bottomrule
\end{tabular}}
\caption{EAGLE3-VL chain-length sweep in vLLM, end-to-end times, $19$ prompts in each task after the warm-up sample. Table~\ref{tab:headline} uses length $5$.}
\label{tab:gammasweep}
\end{table}

\section{Reproducibility}
\label{app:repro}
Section~\ref{sec:method} and Appendix~\ref{app:impl} specify the drafter's architecture, training data, objective, and hyperparameters, and Appendix~\ref{app:algo} gives one decoding round as pseudocode. Appendix~\ref{app:theory} contains the complete proofs of all theoretical statements. Every measurement states its prompt count, and Appendix~\ref{app:ledger} lists each wall-clock comparison with its engine and sample size. \ifglancepublic Our decoding code, exactness audit, and law-fitting code are available at \url{https://github.com/js-lee-AI/GLANCE}, and we will release the drafter heads, the training code, and the round-level logs behind the entropy-law fits.\else We will release the drafter heads, the training and decoding code, the exactness audit, and the round-level logs behind the entropy-law fits.\fi

\section{Additional Wall-Clock Results}
\label{app:ledger}

\paragraph{Where the cost sits.}
Across the five tasks of Table~\ref{tab:sglang}, prefill including the visual encoder takes $3$ to $21\%$ of the end-to-end time of autoregressive decoding, the most on DocVQA, whose answers are the shortest, and the decode loop takes the remainder. The share falls as the output grows, since prefill is paid once and the loop on every token. Counting prefill, GLANCE remains faster than EAGLE3-VL on the three lower-entropy tasks, by $7.6$ to $11.1\%$, and every paired bootstrap interval of these margins lies above zero.

\paragraph{Intervals behind the production-engine comparison.}
The margin column of Table~\ref{tab:sglang} is $\text{EAGLE3-VL}_{\text{ms}}/\text{GLANCE}_{\text{ms}}-1$, the ratio of the two printed speedups. Its interval is a paired bootstrap over the $101$ shared prompts of that statistic with $8000$ resamples. The intervals are $[-13.1,-10.6]$ on captioning, $[-13.1,-9.3]$ on TextVQA, $[+8.4,+12.8]$ on InfographicVQA, $[+8.5,+13.6]$ on DocVQA, and $[+8.7,+13.1]$ on ChartQA. GLANCE is the faster arm on $85$, $81$, and $90$ of the $101$ prompts of the three lower-entropy tasks. The same bootstrap, resampling the prompts of every task at once, puts the geometric-mean margin of $+1.3\%$ in $[+0.3,+2.2]$. Averaging prompt-level ratios instead of taking the ratio of means moves each figure by at most $0.9$ points and reorders nothing. The autoregressive arm stays within $2\%$ of its mean across the five tasks, so the tasks differ in what they generate rather than in what a token costs to decode.

\paragraph{EAGLE3-VL's released configuration.}
The model card of EAGLE3-VL lists an SGLang launch configuration of three draft steps, top-$2$ branching, and four draft tokens~\citep{aqmedai2025eagle3vl}. Run under the protocol of Table~\ref{tab:sglang}, back to back with the eight-step tree on one RTX A6000 for each task, it accepts $2.21$ to $2.71$ tokens a round against $3.51$ to $4.50$, and the eight-step tree decodes faster on every task, by $25.3\%$ on captioning, $32.9\%$ on TextVQA, $18.9\%$ on InfographicVQA, $21.0\%$ on DocVQA, and $30.0\%$ on ChartQA. Every paired bootstrap interval lies above zero, the tree is the faster arm on at least $98$ of the $101$ prompts of each task, and the geometric-mean margin is $+25.5\%$, with a $95\%$ interval from $24.6$ to $26.5\%$.

\paragraph{Training seeds.}
Two further heads were trained with the configuration of Table~\ref{tab:traincard} under seeds $1$ and $2$, with the target's fused states computed during training on two GPUs rather than precomputed, at the same global batch. Run under the protocol of Table~\ref{tab:sglang}, on one RTX A6000 for each task together with EAGLE3-VL and the primary head, both are faster than EAGLE3-VL on the three lower-entropy tasks, by $8.4$ and $9.0\%$ on InfographicVQA, $9.4$ and $9.2\%$ on DocVQA, and $9.0$ and $9.5\%$ on ChartQA, and every paired bootstrap interval lies above zero. On all five tasks their acceptance lengths lie within $4\%$ of the primary head's.

\paragraph{Longer contexts, tree against chain, and budget.}
Table~\ref{tab:ledger} lists the long-context comparisons, on which GLANCE is the faster arm on $31$ of the $44$ prompts at each context length. Table~\ref{tab:ledger-domain} times the tree against the chain, and Table~\ref{tab:budget} sweeps the verifier budget.

\begin{table}[t]
\centering
\setlength{\tabcolsep}{4pt}\footnotesize
\fitcol{%
\begin{tabular}{lcc}
\toprule
Comparison & Context & Result [$95\%$ CI] \\
\midrule
GLANCE vs.\ AR & $2$K & $2.30\times$ $[2.22,2.39]$ \\
EAGLE3-VL vs.\ AR & $2$K & $2.15\times$ $[2.07,2.24]$ \\
GLANCE vs.\ EAGLE3-VL & $2$K & $+6.9\%$ $[+3.4,+10.8]$ \\
\addlinespace
GLANCE vs.\ AR & ${\sim}6.7$K & $1.66\times$ $[1.61,1.73]$ \\
EAGLE3-VL vs.\ AR & ${\sim}6.7$K & $1.58\times$ $[1.51,1.64]$ \\
GLANCE vs.\ EAGLE3-VL & ${\sim}6.7$K & $+5.5\%$ $[+2.0,+9.1]$ \\
\bottomrule
\end{tabular}}
\caption{Decode-only wall-clock comparisons in SGLang on $44$ full-resolution InfographicVQA prompts, by context length. Percentages are the GLANCE-faster-by margin of Table~\ref{tab:sglang}, with paired bootstrap intervals over shared prompts.}
\label{tab:ledger}
\end{table}

\begin{table}[t]
\centering
\setlength{\tabcolsep}{4pt}\footnotesize
\fitcol{%
\begin{tabular}{lcccc}
\toprule
 & \multicolumn{3}{c}{ms/tok $\downarrow$} & \\
\cmidrule(lr){2-4}
Task & AR & tree & chain & tree vs.\ chain $\uparrow$ \\
\midrule
Captioning & $18.30$ & \g{$\mathbf{9.45}$} & $12.88$ & $1.36\times$ \\
TextVQA & $18.22$ & \g{$\mathbf{8.26}$} & $11.42$ & $1.38\times$ \\
DocVQA  & $18.15$ & \g{$\mathbf{8.55}$} & $11.33$ & $1.33\times$ \\
\midrule
Speedup over AR & $1.00\times$ & \g{$\mathbf{2.09\times}$} & $1.54\times$ & $\mathbf{1.36\times}$ \\
\bottomrule
\end{tabular}}
\caption{Decode time of the identical GLANCE head run as a budget-$63$ tree and as a width-$1$ chain in our Hugging Face implementation. The last row gives geometric means. Blue marks the tree.}
\label{tab:ledger-domain}
\end{table}

\begin{table*}[t]
\centering
\setlength{\tabcolsep}{5pt}\footnotesize
\fittext{%
\begin{tabular}{lcccc cccc}
\toprule
 & \multicolumn{4}{c}{acceptance length $\tau\ \uparrow$} & \multicolumn{4}{c}{speedup over AR $\uparrow$} \\
\cmidrule(lr){2-5}\cmidrule(lr){6-9}
Budget $\budget$ & $15$ & $31$ & $47$ & $63$ & $15$ & $31$ & $47$ & $63$ \\
\midrule
Captioning & $2.70$ & $2.89$ & $2.99$ & $3.04$ & $1.87$ & $1.99$ & $\mathbf{2.05}$ & $\mathbf{2.05}$ \\
TextVQA & $2.95$ & $3.14$ & $3.25$ & $3.29$ & $1.95$ & $2.21$ & $2.24$ & $\mathbf{2.25}$ \\
InfographicVQA & $3.21$ & $3.43$ & $3.64$ & $3.63$ & $2.22$ & $2.39$ & $\mathbf{2.46}$ & $2.39$ \\
DocVQA & $3.35$ & $3.61$ & $3.68$ & $3.78$ & $2.31$ & $2.49$ & $2.51$ & $\mathbf{2.55}$ \\
ChartQA & $4.43$ & $4.81$ & $5.00$ & $5.16$ & $3.07$ & $3.30$ & $3.40$ & $\mathbf{3.42}$ \\
\bottomrule
\end{tabular}}
\caption{Verifier-budget sweep of GLANCE in our Hugging Face implementation. All four budgets of a task run in one process and share its autoregressive baseline.}
\label{tab:budget}
\end{table*}

\paragraph{Round costs.}
A GLANCE round is one draft pass and one verify pass, and tree width enters only inside the verify. Regressing the production head's round time in SGLang on its number of draft passes, over five configurations from $4$ to $48$ draft tokens at $3$ to $10$ steps on captioning, gives $24.8$\,ms plus $1.82$\,ms for every sequential draft pass. At batch one the verify is therefore nearly flat in tree width, while depth is paid one pass at a time. Widening our tree $4.2\times$ costs $3$ to $5\%$ of a round, whereas taking that head from two passes to eight costs $38\%$ of one.

\section{Classic Two-Model Speculative Decoding}
\label{app:classicsd}
Classic two-model speculation runs on our target through the official Hugging Face assisted-generation path with three drafts, the image-conditioned Qwen3-VL-4B and Qwen3-VL-2B and a text-only Qwen3-1.7B in the spirit of the strong baseline of the first multimodal study~\citep{gagrani2024speculative}. Table~\ref{tab:classicsd} reports all three. The $4$B draft accepts long blocks, with $\tau$ from $3.53$ to $4.47$ rising with grounding, yet decodes at $0.61$ to $0.80\times$, since a half-size draft costs roughly half a target forward for each drafted token and no acceptance amortizes that. The $2$B draft accepts less and stays below $1\times$ throughout, so halving the draft again narrows the deficit without closing it. The text-only draft shows what vision access is worth, since its acceptance collapses to $1.42$ to $2.12$. Unlike the text-only states of Figure~\ref{fig:systems}, which the block head still reads, this external draft reads nothing of the target at all.

\begin{table*}[t]
\centering
\setlength{\tabcolsep}{5pt}\footnotesize
\begin{tabular}{llccc}
\toprule
Draft & Task & $n$ & $\tau\,\uparrow$ [$95\%$ CI] & Speedup over AR\,$\uparrow$ [CI] \\
\midrule
Qwen3-VL-4B & Captioning & $100$ & $3.53$ [$3.42,3.64$] & $0.61\times$ [$0.60,0.63$] \\
            & TextVQA    & $100$ & $3.79$ [$3.65,3.93$] & $0.67\times$ [$0.65,0.69$] \\
            & InfoVQA    & $100$ & $3.95$ [$3.74,4.17$] & $0.70\times$ [$0.68,0.72$] \\
            & DocVQA     & $100$ & $4.39$ [$4.19,4.60$] & $0.76\times$ [$0.74,0.78$] \\
            & ChartQA    & $100$ & $4.47$ [$4.31,4.63$] & $0.80\times$ [$0.79,0.82$] \\
\addlinespace
Qwen3-VL-2B & Captioning & $100$ & $3.11$ [$3.01,3.21$] & $0.66\times$ [$0.64,0.68$] \\
            & TextVQA    & $100$ & $3.26$ [$3.15,3.39$] & $0.72\times$ [$0.70,0.74$] \\
            & InfoVQA    & $100$ & $3.21$ [$3.07,3.35$] & $0.74\times$ [$0.72,0.76$] \\
            & DocVQA     & $100$ & $3.58$ [$3.43,3.74$] & $0.84\times$ [$0.82,0.87$] \\
            & ChartQA    & $100$ & $3.97$ [$3.81,4.13$] & $0.90\times$ [$0.88,0.92$] \\
\addlinespace
Qwen3-1.7B, text-only & Captioning & $100$ & $1.49$ [$1.46,1.51$] & $0.36\times$ [$0.35,0.38$] \\
            & TextVQA    & $100$ & $1.42$ [$1.40,1.44$] & $0.32\times$ [$0.31,0.34$] \\
            & InfoVQA    & $100$ & $1.90$ [$1.84,1.97$] & $0.47\times$ [$0.45,0.49$] \\
            & DocVQA     & $100$ & $1.68$ [$1.64,1.72$] & $0.43\times$ [$0.41,0.45$] \\
            & ChartQA    & $100$ & $2.12$ [$2.05,2.20$] & $0.58\times$ [$0.56,0.61$] \\
\bottomrule
\end{tabular}
\caption{Classic two-model speculative decoding on Qwen3-VL-8B with three drafts, all on the same prompts and protocol, with speedups from the same run.}
\label{tab:classicsd}
\end{table*}

\section{Additional Law and Transfer Results}
\label{app:extra}

\paragraph{Fits on all five tasks.}
Table~\ref{tab:lawfit} lists the entropy-acceptance fits, and Figure~\ref{fig:lawcurves} draws them. InfoVQA is logged under the same streaming protocol as captioning, TextVQA, and DocVQA, while ChartQA is logged under its templated evaluation prompts and carries the steepest slope of the five. Both floors clear their own caps, $4.42$ against $3.22$ and $6.92$ against $4.50$, and the share of rounds accepting eight or more tokens rises with grounding, at $0.0$, $0.9$, $2.6$, $4.4$, and $11.5\%$ from captioning to ChartQA.

\begin{table*}[t]
\centering
\setlength{\tabcolsep}{4pt}\footnotesize
\begin{tabular}{lcccccccc}
\toprule
Task & Rounds & $b_0$ & $b_1$ & $R^2_{\mathrm{par}}$ & $R^2_{\mathrm{iso}}$ & $R^2_{\mathrm{rnd}}$ & Ceiling fraction & Mean $\acc$ \\
\midrule
Captioning & $4026$ & $0.819$ & $0.459$ & $0.949$ & $\mathbf{0.990}$ & $0.097$ & $89\%$ & $1.881$ \\
TextVQA & $2507$ & $0.936$ & $0.578$ & $0.704$ & $\mathbf{0.968}$ & $0.091$ & $63\%$ & $2.120$ \\
InfoVQA & $1858$ & $1.169$ & $0.804$ & $0.679$ & $\mathbf{0.979}$ & $0.128$ & $62\%$ & $2.672$ \\
DocVQA  & $1218$ & $1.226$ & $0.954$ & $0.452$ & $\mathbf{0.964}$ & $0.074$ & $39\%$ & $3.076$ \\
ChartQA, templated & $2190$ & $1.504$ & $1.710$ & $0.543$ & $\mathbf{0.991}$ & $0.138$ & $50\%$ & $3.709$ \\
\midrule
Pooled, captioning, TextVQA, DocVQA & $7751$ & $0.964$ & $0.630$ & $0.705$ & $0.917$ & $0.107$ & $62\%$ & $2.146$ \\
\bottomrule
\end{tabular}
\caption{Entropy-acceptance fits, with the affine-logit coefficients, the parametric, isotonic, and round-level $R^2$, the fraction of the entropy-only ceiling realized, and the mean accepted length.}
\label{tab:lawfit}
\end{table*}

\begin{figure}[t]
\centering
\includegraphics[width=\columnwidth]{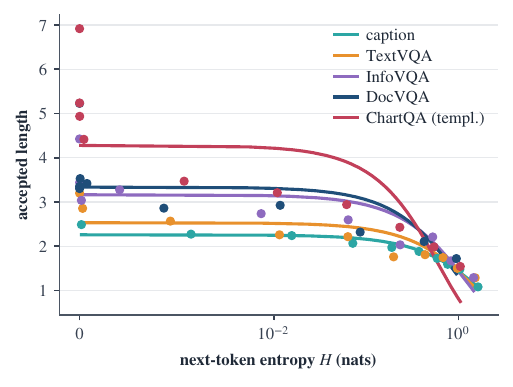}
\caption{Fitted entropy-acceptance curves for each task.}
\label{fig:lawcurves}
\end{figure}

\paragraph{Brackets.}
Figure~\ref{fig:boundbody} places each task's fitted law between two independent bounds, the AdaEDL lower bound~\citep{agrawal2024adaedl} and the verification ceiling $2\log\budget/\HH$ at $\budget{=}31$, and every fit stays inside both.

\begin{figure*}[t]
\centering
\includegraphics[width=\textwidth]{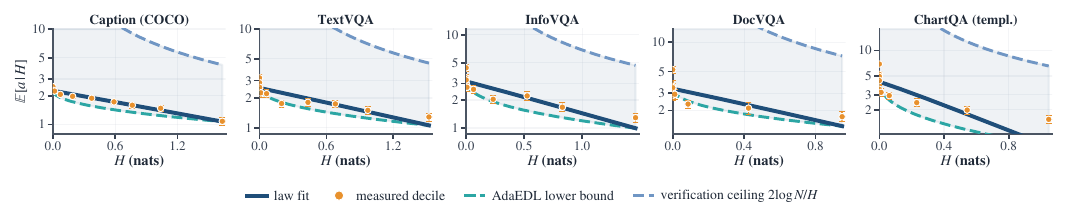}
\caption{Fitted law of each task, drawn against root entropy $\HH$ on a log scale, with the AdaEDL lower bound, the verification ceiling $2\log\budget/\HH$ at $\budget{=}31$, and measured entropy deciles as points.}
\label{fig:boundbody}
\end{figure*}

\paragraph{Entropy against image saliency.}
Table~\ref{tab:charstats} gives the correlations and decile gaps behind the law. Entropy correlates with acceptance on every task, while an image-saliency signal $v_t$ does not survive conditioning on entropy, with partial $|r|\le0.09$ and gaps near $1$ within entropy quintiles.

\begin{table}[t]
\centering
\setlength{\tabcolsep}{2pt}\footnotesize
\fitcol{%
\begin{tabular}{lccc}
\toprule
Quantity & Captioning & TextVQA & DocVQA \\
\midrule
Spearman $\rho(\HH,\acc)$ & $-0.311$ & $-0.351$ & $-0.384$ \\
\quad $p$ & $3.8{\times}10^{-91}$ & $1.5{\times}10^{-73}$ & $4.0{\times}10^{-44}$ \\
\quad low/high $\HH$-decile gap & $2.30\times$ & $2.49\times$ & $3.04\times$ \\
\addlinespace
Spearman $\rho(v_t,\acc)$ & $-0.098$ & $-0.025$ & $+0.129$ \\
\quad partial $r(v_t,\acc\mid\HH)$ & $-0.058$ & $-0.026$ & $+0.088$ \\
\quad within-quintile $v_t$ gap & $1.07\times$ & $1.07\times$ & $0.94\times$ \\
\bottomrule
\end{tabular}}
\caption{Correlations and gaps for accepted length $\acc$, entropy $\HH$, and image saliency $v_t$. The within-quintile gap splits each entropy quintile at its median $v_t$.}
\label{tab:charstats}
\end{table}

\paragraph{Mismatched-image probe.}
The correct image lengthens accepted blocks by $+3.5\%$ on captioning with a $95\%$ interval of $[+1.5,+5.7]$, by $+6.8\%$ on TextVQA with $[+4.2,+9.4]$, and by $+11.7\%$ on DocVQA with $[+7.7,+16.3]$. Once entropy is held fixed, the pooled coefficient of the residual image effect is $-0.07$.

\paragraph{Transfer across drafters, targets, and modalities.}
Table~\ref{tab:lawrobust} varies the target, and Figure~\ref{fig:crosstarget} plots the fits. The two further Qwen targets use same-family two-model drafts, which are autoregressive, so the form holds for both kinds of drafter and is a property of the target. On Llama-3.2-11B-Vision, where vision enters by cross-attention rather than as a token prefix, GLANCE remains lossless and the law holds on captioning, and we state the characterization for prefix-fusion targets. Table~\ref{tab:crossmod} extends the form beyond vision to speech recognition, code, chat, and a non-Qwen backbone. Grounded vision-language generation and code follow the determinate-copy route of the two-state analysis, with long verbatim runs, while speech behaves like a uniform lift at every position, and both are limits of the same survival family.

\begin{table}[t]
\centering
\setlength{\tabcolsep}{5pt}\footnotesize
\fitcol{%
\begin{tabular}{lccc}
\toprule
Target & Qwen2.5-VL & Qwen2-VL & Llama-3.2 \\
 & 7B & 7B & 11B-Vision \\
\midrule
Drafter & two-model & two-model & GLANCE \\
$b_1$ & $0.67$ & $0.40$ & $0.41$ \\
Curve $R^2\uparrow$ & $0.70$ & $0.95$ & $0.91$ \\
$\rho(\HH,\acc)$ & $-0.42$ & $-0.30$ & $-0.38$ \\
Mean $\acc\uparrow$ & $3.67$ & $3.08$ & $2.86$ \\
\bottomrule
\end{tabular}}
\caption{Transfer of the law's form across targets. The two-model drafts are same-family $3$B and $2$B models, and the Llama column is fitted at horizon $\Lmax{=}9$ on its captioning rounds. Mean $\acc$ excludes the bonus token.}
\label{tab:lawrobust}
\end{table}

\begin{table}[t]
\centering
\setlength{\tabcolsep}{3pt}\footnotesize
\fitcol{%
\begin{tabular}{llccc}
\toprule
Setting & Model pair & $b_0$ & $b_1$ & $\tau\,\uparrow$ \\
\midrule
speech (ASR)      & Qwen2.5-Omni 7B$+$3B    & $2.07$ & $0.49$ & $5.41$ \\
code              & Qwen2.5-Coder 7B$+$0.5B & $1.94$ & $1.76$ & $5.77$ \\
chat (WildChat)   & two-model               & $1.66$ & $0.64$ & $4.33$ \\
non-Qwen backbone & OLMo-2 13B$+$7B         & $1.85$ & $0.84$ & $5.25$ \\
\bottomrule
\end{tabular}}
\caption{Form of the law across modalities and on a non-Qwen backbone, with two-model pairs.}
\label{tab:crossmod}
\end{table}

\begin{figure}[t]
\centering
\includegraphics[width=0.80\columnwidth]{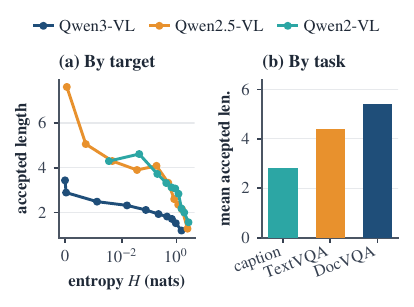}
\caption{Entropy-acceptance fits for three targets (a) and the mean accepted length of each task on Qwen2.5-VL-7B (b).}
\label{fig:crosstarget}
\end{figure}

\paragraph{Conditioning ablation.}
Table~\ref{tab:ablation} tabulates the values behind panel (b) of Figure~\ref{fig:systems} together with the zeroed-state control. Pooled over the five tasks and their $300$ prompts, fused visual conditioning lifts acceptance by $+14.1\%$, with a $95\%$ interval of $[+12.8,+15.5]\%$, and the grounded tasks carry the lift.

\begin{table}[t]
\centering
\setlength{\tabcolsep}{4pt}\small
\fitcol{%
\begin{tabular}{lcccc}
\toprule
 & \multicolumn{3}{c}{$\tau\uparrow$ by what the head reads} & \\
\cmidrule(lr){2-4}
Task & fused VL & text-only & zeroed & text/fused \\
\midrule
Captioning & \g{$\mathbf{2.92}$} & $2.84$ & $1.10$ & $0.97$ \\
TextVQA    & \g{$\mathbf{3.22}$} & $2.95$ & $1.12$ & $0.92$ \\
InfoVQA    & \g{$\mathbf{3.51}$} & $3.12$ & $1.10$ & $0.89$ \\
DocVQA     & \g{$\mathbf{3.62}$} & $2.91$ & $1.10$ & $0.80$ \\
ChartQA    & \g{$\mathbf{4.80}$} & $4.02$ & $1.15$ & $0.84$ \\
\midrule
Mean       & \g{$\mathbf{3.61}$} & $3.17$ & $1.11$ & $0.88$ \\
\bottomrule
\end{tabular}}
\caption{Conditioning ablation at budget $31$ with $60$ prompts in each task. The text-only column uses a text-only Qwen3-8B's states at text positions and zeros at image positions. Blue marks GLANCE.}
\label{tab:ablation}
\end{table}

\end{document}